\newlist{enumthm}{enumerate}{1}
\setlist[enumthm]{label=(\alph*)}
\DeclarePairedDelimiter\ceil{\lceil}{\rceil}
\DeclarePairedDelimiter\floor{\lfloor}{\rfloor}
\newcommand{\PreserveBackslash}[1]{\let\temp=\\#1\let\\=\temp}
\newcolumntype{C}[1]{>{\PreserveBackslash\centering}p{#1}}
\theoremstyle{plain}
\newtheorem*{remark}{Remark}
\newtheorem{theorem}{Theorem}[section]
\newtheorem{lemma}[theorem]{Lemma}
\newtheorem{assumption}{Assumption}
\newtheorem{corollary}[theorem]{Corollary}
\numberwithin{equation}{section}
\DeclareMathOperator*{\argmax}{argmax}
\DeclareMathOperator*{\argmin}{argmin} 
\def\0{\mathbf{0}}
\def\cB{\mathcal{B}}
\def\cF{\mathcal{F}}
\def\cG{\mathcal{G}}
\def\bI{\mathbf{I}}
\def\cL{\mathcal{L}}
\def\hcL{\widehat{\mathcal{L}}}
\def\cE{\mathcal{E}}
\def\Rbb{\mathbb{R}}
\def\cX{\mathcal{X}}
\def\cY{\mathcal{Y}}
\def\Ebb{\mathbb{E}}
\def\ld{\ldots}
\def\fracsumton{\frac{1}{n}\sum_{i=1}^n}
\def\tht{\theta}
\def\hG{\hat{G}}
\def\hD{\hat{D}}
\def\supcF{\sup_{f\in\cF^1}}
\def\vtheta{{\bm{\theta}}}
\def\hvtheta{\hat{\vtheta}}
\def\vphi{{\bm{\phi}}}
\title{Wasserstein Generative Learning of\\ Conditional Distribution}
\author{
Shiao Liu\thanks{Equal contribution.} \thanks{Department of Statistics and Actuarial Science, University of Iowa, Iowa City, Iowa 52242, USA. Email: shiao-liu@uiowa.edu}
\  Xingyu Zhou$^*$\thanks{Department of Statistics and Actuarial Science, University of Iowa, Iowa City, Iowa 52242, USA. Email: xingyu-zhou@uiowa.edu}
\ Yuling Jiao\thanks{School of Mathematics and Statistics, Wuhan University, Wuhan, Hubei, China 430072.  Email: yulingjiaomath@whu.edu.cn}
 \ and
 Jian Huang\thanks{Department of Statistics and Actuarial Science, University of Iowa, Iowa City, Iowa 52242, USA. Email: jian-huang@uiowa.edu}}
\date{\today}
\begin{document}

\maketitle

\begin{abstract}
Conditional distribution is a fundamental quantity for describing the relationship between a response  and a predictor. We propose a Wasserstein generative approach to learning a conditional distribution. The proposed approach uses a conditional generator to transform a known distribution to the target conditional distribution. The conditional generator is estimated by matching a joint distribution involving the conditional generator and the target joint distribution, using the Wasserstein distance as the discrepancy measure for these joint distributions. We establish  non-asymptotic error bound of the conditional sampling distribution generated by the proposed method and show that it is able to mitigate the curse of dimensionality, assuming that the data distribution is supported on a lower-dimensional set. We conduct numerical experiments to validate proposed method and illustrate its applications to conditional sample generation, nonparametric conditional density estimation, prediction uncertainty quantification, bivariate response data,
image reconstruction and image generation.
image generation and reconstruction.
\end{abstract}


\noindent
\textbf{Keywords:}
Conditional distribution; Generative learning; Neural Networks; Non-asymptotic error bounds; Nonparametric estimation.

\section{Introduction}

Conditional distribution is a fundamental quantity
for measuring how a response variable $Y$ depends on a predictor $X$.
Unlike regression methods that only model certain aspects of the relationship between $Y$ and $X$,  such as the conditional mean, conditional distribution provides a complete description of the relationship.
In this paper, we propose a nonparametric generative approach to learning a conditional distribution. This approach uses a function, which we shall refer to as a conditional generator, that transforms a known reference distribution to the target conditional distribution. The conditional generator is estimated by matching the joint distribution involving the conditional generator and the predictor and the joint distribution of the response and the predictor. We use the Wasserstein distance as the discrepancy measure for matching these joint distributions.

There is a vast literature on nonparametric conditional density estimation.
Many existing methods use smoothing techniques, including kernel smoothing and local polynomials
\citep{rosenblatt1969, scott1992,
fan1996,
hyndman1996estimating,
hall2005, bott2017}.
Basis expansion methods  have also been developed for nonparametric conditional density estimation  
 \citep{
 izbicki2016nonparametric, izbicki2017converting}.
A common feature of these methods is that they seek to estimate the functional form of the conditional density. However, these existing conditional density estimation methods
do not work well with high-dimensional complex data.
In addition, most existing methods only consider the case when the response $Y$ is a scalar and is not applicable to the settings when $Y$ is a high-dimensional response vector.

The proposed
approach is inspired by the {generative adversarial networks}
(GAN) \citep{goodfellow14} and Wasserstein GAN (WGAN)  \citep{arjovsky17}.
These methods were developed to learn high-dimensional (unconditional) distributions nonparametrically. Instead of estimating the functional forms of density functions, GAN and WGAN start from a known reference distribution and
use a function that pushes the reference distribution to the data distribution. In practice, this function, often referred to as a generator,  is usually parameterized using deep neural networks.
In GAN and WGAN, it is only necessary to estimate a single generator function for sampling from a
(unconditional) distribution.
However, to sample from a conditional distribution, the generator function necessarily depends on
the given value of $X$ to be conditioned on. Since it is difficult to estimate a collection of generator functions for all the possible values of $X$, an effective approach is to formulate the generator function as a map from the product space of the spaces of the reference distribution and $X$ to the space of $Y$. The existence of such a map is guaranteed by the noise-outsourcing lemma in probability theory \citep{kall2002}.

Several authors have generalized GANs to the setting of learning a conditional distribution. \cite {mirza2014cgan} proposed
 conditional generative adversarial networks (cGAN).
 Similar to GANs, it solves a two-player minimax game
using an objective function with the same form as that of the original GAN \citep{goodfellow14}.
\cite{kovachki2021conditional} proposed a conditional sampling approach with monotone GANs.
A limitation of this approach is that it requires the dimension of the reference distribution to be the same as the dimension of $Y$. In the high-dimensional settings, this does not allow the exploration of a possible low-dimensional latent structure in the data.
\cite{zjlh2021} proposed a generative approach to conditional sampling based on the noise-outsourcing lemma and distribution matching.
The Kullback-Liebler divergence is used for matching the generator distribution and the data distribution. They established consistency of the conditional sampler with respect to the total variation distance, with the help of Pinsker's inequality that bounds the total variation distance via  the Kullback-Liebler divergence \citep{npe2008}.
However, it is difficult to establish the
convergence rate of the conditional sampling distribution with the Kullback-Liebler divergence.

Although the Kullback-Liebler divergence is an attractive discrepancy measure for distributions, it has some drawbacks in the present setting \citep{arjovsky17}. First, the Kullback-Liebler divergence is a strong divergence measure, for example, it is stronger than the Jensen-Shannon divergence and the total variation distance. Weak convergence of distributions does not imply their convergence in Kullback-Liebler divergence. Second, in high-dimensional problems, we are often interested in learning distributions with a low-dimensional latent structure, whose density functions may not exist. In this case, it is not sensible to use
the Kullback-Liebler divergence. In contrast, the Wasserstein distance metricizes the space of probability distributions under mild conditions. This enables us to obtain the non-asymptotic error bounds of
the proposed method
and its convergence rate, see Theorems \ref{thm1} and \ref{thm2} in Section \ref{errorbound}.
Since the computation of  the proposed method
does not involve density functions, we can use it
to learn distributions without a density function, such as distributions supported on a set with a lower intrinsic dimension than the ambient dimension.  We show that the proposed method has an improved convergence rate under a low-dimensional support assumption and
can mitigate the curse of dimensionality, see Theorem \ref{thmlowdim} in Section \ref{cod}.

The proposed
method for learning a conditional distribution has several appealing properties compared with the standard conditional density estimation methods. First,  the proposed method
can use a reference distribution with a lower dimension than that of the target distribution, therefore, it can learn conditional distributions with a lower-dimensional latent structure by using a low-dimensional reference distribution.
Second,  there is no restriction on the dimensionality of the response variable, while the standard methods typically only consider the case of a scalar response variable. Third, the proposed method allows continuous, discrete and mixed types of predictors and responses, while the smoothing and basis expansion methods only deal with continuous-type variables. Finally, since the computation of Wasserstein distance does not involve density functions, it can be used as a loss function for learning distributions without a density function, such as distributions supported on a set with a lower intrinsic dimension than the ambient dimension. Also, by using the Wassertein distance, we are able to establish the non-asymptotic error bound of the generated sampling distribution. To the best of our knowledge, this is the first error bound result in the context of conditional generative learning.

The remainder of this paper is organized as follows. In Section \ref{wgcs}
 we  present the proposed
  method.
 In Section \ref{convergence} we establish the non-asymptotic error bounds for 
the  generated sampling distribution by the proposed method.
 in terms of the Dudley metric. 
 We also show that the proposed method 
 is able to mitigate the curse of dimensionality under the assumption that the joint distribution of
$(X, Y)$ is supported on a set with a low Minkowski dimension. 
In Section \ref{implementation} we desribe the implementation of the proposed method. 
In Section \ref{experiments}  we conduct numerical experiments to validate the proposed method and illustrate its applications in conditional sample generation,  nonparametric conditional density estimation, visualization of multivariate data, image generation and image reconstruction.
Concluding remarks are given in Section \ref{conclusion}. Additional numerical experiment results and the technical details are given in the appendices.

\section{Method} 
\label{wgcs}

We first describe an approach to representing a conditional distribution via a conditional generator function based
on the noise-outsoucing lemma. We then describe the proposed method
based on matching
distributions using Wasserstein metric.

\subsection{Conditional sampling based on noise outsourcing}
Consider a pair of random vectors  $(X, Y) \in \cX \times \cY$, where $\cX \subseteq \mathbb{R}^d$ and $\cY \subseteq \mathbb{R}^q.$
Suppose $(X, Y)\sim P_{X, Y}$ with marginal distributions
$X \sim P_X$ and $Y\sim P_Y$. Denote the conditional distribution of $Y$ given $X$ by $P_{Y\mid X}$. For a given value $x$ of $X$, we also write the conditional distribution as $P_{Y\mid X=x}$.
For regression problems, $\cY \subseteq \mathbb{R}^q$ with $q \ge 1$;  for classification problems, $\cY$  is a set of finitely many labels.
We assume $\cX \subseteq \mathbb{R}^d$ with $d \ge 1$.
The random vectors $X$ and $Y$ can contain both continuous and categorical components.
Let $\eta \in \mathbb{R}^m$ be a random vector independent of $(X, Y)$ with a known distribution $P_{\eta}$ that is easy to sample from,  such as a normal distribution.

We are interested in finding a function  $G: \mathbb{R}^m \times \cX \mapsto \cY$ such that the conditional distribution of $G(\eta,X)$ given $X=x$ equals the conditional distribution of $Y$ given $X=x$.
Since $\eta$ is independent of $X$, this is equivalent to finding a $G$ such that
\begin{equation}
\label{dm1a}
G(\eta, x) \sim P_{Y\mid X=x},\ x \in \cX.
\end{equation}
Therefore,  to sample from the conditional distribution $P_{Y\mid X=x}$, we can first sample an $\eta\sim P_{\eta}$ and calculate $G(\eta, x)$, then the resulting $G(\eta, x)$  is a sample from
$ P_{Y\mid X=x}.$  Because of this property, we shall refer to $G$ as a conditional generator.
This approach has also been used in \cite{zjlh2021}.
The existence of such a $G$ is guaranteed by the noise-outsourcing lemma
 (Theorem 5.10 in
\cite{kall2002}). For ease of reference, we state it here with a slight modification.

\begin{lemma}
\label{NOLemma}
(Noise-outsourcing lemma).
Suppose $\cY$ is a standard Borel space. Then there exist a random vector $\eta \sim
N(\mathbf{0},  \mathbf{I}_m)$ for a given $m \ge 1$ and a Borel-measurable function $G : \mathbb{R}^m \times \cX \to \cY$ such that $\eta$ is independent of $X$ and
\begin{equation}
\label{NOa}
(X, Y)= (X, G(\eta, X))  \ \text{almost surely.}
\end{equation}
\end{lemma}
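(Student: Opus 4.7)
The plan is to reduce to a scalar response by a Borel isomorphism and then construct $G$ by inverting the conditional distribution function, with an auxiliary randomization to handle atoms.

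\textbf{Reduction.} Since $\cY$ is standard Borel, Kuratowski's isomorphism theorem furnishes a Borel isomorphism $\psi : \cY \to B$ for some Borel $B \subseteq [0,1]$. Writing $Y' = \psi(Y)$, it suffices to produce a Borel $G' : \mathbb{R}^m \times \cX \to [0,1]$ with $Y' = G'(\eta, X)$ almost surely, and then set $G = \psi^{-1} \circ G'$, extending $\psi^{-1}$ arbitrarily outside $B$ so that $G$ is Borel into $\cY$. Moreover, since $\mathbb{R}^m$ and $\mathbb{R}$ are Borel isomorphic for any $m \ge 1$, after solving the case $m=1$ we can inflate the noise by appending independent $N(0,1)$ coordinates that $G$ simply ignores.

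\textbf{Conditional quantile construction.} Because $\cY$ is Borel, the regular conditional distribution $K(x, \cdot) = P_{Y' \mid X = x}$ exists. Let $F(y \mid x) = K(x, (-\infty, y])$, which is jointly Borel in $(y, x)$, and define the right-continuous generalized inverse $F^{-1}(u \mid x) = \inf\{ y : F(y \mid x) \ge u \}$, jointly Borel via the identity $\{(u, x) : F^{-1}(u \mid x) \le y\} = \{(u, x) : u \le F(y \mid x)\}$. Enlarging the probability space if necessary, let $V \sim \mathrm{Uniform}(0,1)$ be independent of $(X, Y)$, and define the distributional transform
\[
U \;=\; F(Y'- \mid X) + V \cdot \bigl[ F(Y' \mid X) - F(Y'- \mid X) \bigr],
\]
where $F(y- \mid x)$ denotes the left-limit in $y$. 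A standard argument shows that $U$ is uniform on $(0,1)$, is independent of $X$, and satisfies $F^{-1}(U \mid X) = Y'$ almost surely. Setting $\eta = \Phi^{-1}(U)$ and $G'(e, x) = F^{-1}(\Phi(e) \mid x)$ yields $\eta \sim N(0, 1)$ independent of $X$ together with $G'(\eta, X) = Y'$ almost surely, which gives the claim after composing with $\psi^{-1}$ and padding the noise dimension.

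\textbf{Main obstacle.} The chief technical step is justifying the distributional transform: one must verify that $U$ is exactly uniform, that $U \perp X$ despite being built from $(X, Y')$, and that $F^{-1}(U \mid X) = Y'$ a.s. All three properties rely on the auxiliary $V$ to smear the uniform mass across the jumps of $F(\cdot \mid X)$, which is essential whenever the conditional distribution has atoms; without $V$, the naive choice $U = F(Y' \mid X)$ would only be sub-uniform. The remaining ingredients — Kuratowski's reduction, joint Borel measurability of $F^{-1}$, the availability of regular conditional distributions on a standard Borel space, and the conversion of a uniform to a standard normal via $\Phi^{-1}$ — are routine measure theory.
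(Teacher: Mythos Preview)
Your argument is correct. The paper, however, does not actually prove this lemma: it simply invokes Theorem~5.10 of Kallenberg (2002) as a black box for the version with scalar uniform noise, and then remarks that replacing the uniform by $N(\mathbf{0},\mathbf{I}_m)$ is a ``simple consequence'' --- i.e., exactly the $\Phi^{-1}$ composition and coordinate padding you perform at the end. Your proposal thus goes further than the paper by unpacking what Kallenberg's theorem asserts: you give a self-contained construction via the distributional transform (the R\"uschendorf device of randomizing over jumps with an auxiliary $V$), which is indeed one standard route to the noise-outsourcing lemma. The trade-off is that the paper's approach is a one-line citation appropriate when the lemma is purely instrumental, while yours is explicit and in particular exhibits $G$ as a conditional quantile function --- conceptually informative, but not used anywhere downstream in the paper.
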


In Lemma \ref{NOLemma},  the noise distribution $P_{\eta}$
is taken to be $N(\0,  \bI_m)$ with $m \ge 1$, since it is convenient to generate random numbers from a standard normal distribution.  It is a simple consequence of the original noise-outsourcing lemma which uses the uniform distribution on $[0, 1]$ for the noise distribution.
The dimension  $m$ of the noise vector does not need to be the same as $q$, the dimension of $Y.$

Because $\eta$ and $X$ are independent,
a $G$ satisfies \eqref{dm1a} if and only if it also satisfies  \eqref{NOa}.
Therefore, to construct the conditional generator, we can find a $G$ such that the joint distribution of $(X, G(\eta, X))$ matches the joint distribution of $(X, Y)$. This is the basis of the proposed generative approach described below.

\subsection{Wasserstein generative conditional sampler}
\label{wgcs2}
Let $\Omega$  be a subset of $\mathbb{R}^k, k \ge 1,$ on which the measures we consider are defined. Let $\cB_1(\Omega)$ be the set of Borel probability measures on $\Omega$ with finite first moment, that is,  the set of probability measures
$\mu$ on $\mathbb{R}^k$ such that $\int_{\Omega}\|x\|_1 d\mu(x) < \infty$, where
$\|\cdot\|_1$ denotes the 1-norm in $\mathbb{R}^k$.
The $1$-Wasserstein metric is defined as
\begin{equation}
\label{wassersteinp}
W_1(\mu, \nu)= \inf_{\gamma \in \Gamma(\mu, \nu)}
\int \|u-v\|_1d\gamma(u, v), \  \mu,  \nu \in \cB_1(\Omega),
\end{equation}
where $\Gamma(\mu, \nu)$ is the set of joint probability distributions with marginals
$\mu$ and $\nu.$
The $1$-Wasserstein metric is also known as the
earth mover distance. A computationally more convenient form of the 1-Wasserstein metric is the Monge-Rubinstein dual \citep{villani2008optimal2},
\begin{equation}
\label{1wassersteinb}
W_1(\mu, \nu)=\sup_{f \in \cF^1_{\text{Lip}}}
\left\{\Ebb_{U \sim \mu} f(U) - \Ebb_{V\sim \nu} f(V) \right\}.
\end{equation}
where $\cF^1_{\text{Lip}}$ is the $1$-Lipschitz class,
\begin{equation}
\label{Lpi1}
\cF^1_{\text{Lip}}=\{f: \mathbb{R}^k \to \mathbb{R}, |f(u)-f(v)| \le \|u-v\|_2,\  u, v \in \Omega\}.
\end{equation}
When only random samples from $\mu$ and $\nu$ are available in practice, it is easy to obtain the empirical version of (\ref{1wassersteinb}).

We apply the $1$-Wasserstein metric  (\ref{1wassersteinb}) to the present conditional generative learning problem, that is, we seek a conditional generator function
$G: \mathbb{R}^m \times \mathcal{X} \to \mathbb{R}^q$ satisfying
(\ref{dm1a}). The basic idea is to formulate this problem as a minimisation problem based on the $1$-Wasserstein metric.
By Lemma \ref{NOLemma}, we can find a conditional generator $G$ by matching the joint distributions
$P_{X, G(\eta, X)}$ and $P_{X,Y}.$
By (\ref{1wassersteinb}), the 1-Wasserstein distance between $P_{X, G(\eta, X)}$ and $P_{X,Y}$ is
\[
W_1(P_{X, G(\eta, X)}, P_{X,Y}) = \sup_{D\in \cF^1_{\text{Lip}}}\left\{
\Ebb_{(X, \eta)\sim P_X P_{\eta}} D(X,G(\eta,X))-\Ebb_{(X, Y)\sim P_{X,Y}}
 D(X,Y)\right\}.
\]
We have $W_1 (P_{X, G(\eta, X)},  P_{X, Y}) \ge 0$ for every measurable $G$ and
$W_1(P_{X, G(\eta, X)}, P_{X, Y}) =0$ if and only if $P_{X, G(\eta,X)}=P_{X,Y}.$
Therefore, a sufficient and necessary condition for
\[
G^* \in \argmin_{G \in \cG}  W_1(P_{X, G(\eta, X)}, P_{X,Y}),
\]
is  $P_{X, G^*(\eta, X)} = P_{X, Y}$, which implies
 $G^*(\eta, x) \sim P_{Y\mid X=x}, x \in \cX.$
It follows that the problem of finding the conditional generator can be
formulated as the minimax problem:
\[
\argmin_{G \in \cG} \argmax_{D \in \cF^1_{\text{Lip}}} \cL (G, D),
\]
where
\begin{equation}
\label{objL}
\cL(G,D)=\Ebb D(X,G(\eta,X))-\Ebb D(X,Y).
\end{equation}

Let $\{(X_i,Y_i), i=1,\ldots,n\}$ be a random sample from $P_{X,Y}$ and let $\{\eta_i, i=1,\ldots,n\}$ be independently generated from $P_{\eta}$.
The empirical version of $\cL(G,D)$ based on  $(X_i, Y_i, \eta_i), i=1,\ldots, n,$ is
\begin{align}
\cL_n(G,D) =& \fracsumton D(X_i,G(\eta_i,X_i))-\fracsumton D(X_i,Y_i) .
\label{objE}
\end{align}

We use a feedforward neural network $G_{\vtheta}$ with parameter $\vtheta$ for estimating the conditional generator  G and a second network $D_{\vphi}$ with parameter $\vphi$ for estimating the discriminator $D$.  We refer to
\citet{goodfellow2016deep} for a detailed description of neural network functions.
We estimate  $\vtheta$ and $\vphi$ by solving the minimax problem:
\begin{equation}
(\hat{\vtheta},\hat{\vphi})=\argmin_{\vtheta}\argmax_{\vphi}\cL_n(G_{\vtheta},D_{\vphi}) .
\label{2}
\end{equation}
The estimated conditional generator is $\hG=G_{\hat{\vtheta}}$ and the estimated discriminator is $\hD=D_{\hat{\vphi}}$.

We show below that for $\eta \sim P_{\eta}$,
$\hG(\eta, x)$ converges in distribution to the conditional distribution $P_{Y\mid X=x}, x \in \cX$.
Therefore, we can use $\hG$ to learn this conditional distribution.  Specifically,
for an integer  $J > 1$, we generate a random sample $\{\eta_j, j=1,\ldots, \eta_J\}$ from the reference distribution $P_{\eta}$ and calculate
$\{\hG(\eta_j, x), j=1, \ldots, J\}$, which are approximately distributed as $P_{Y\mid X=x}.$
Since generating random samples from $P_{\eta}$ and calculating $\hG(\eta_j, x)$
are inexpensive,  the computational cost is low once $\hG$ is obtained.
For $Y \in \mathbb{R}$, this immediately leads to the estimated conditional quantiles of $P_{Y\mid X=x}.$
Moreover, for any function $g: \cY \to \mathbb{R}^k$,  by the noise outsourcing lemma, we have
$\Ebb(g(Y)\mid X=x)=\mathbb{E} g(G(\eta, x)).$
We can estimate this conditional expectation by $n^{-1} \sum_{j=1}^J g(\hG(\eta_j;x)).$
In particular, the conditional mean and conditional variance of $P_{Y\mid X=x}$
can be estimated in a straightforward way.
In Section \ref{experiments} below, we illustrate this approach
with a range of  examples.

\section{Non-asymptotic error analysis}
\label{convergence}
We establish the non-asymptotic error bound for the proposed method
in terms of the integral probability metric \citep{muller}
\[
d_{\cF^1_B}(P_{X G}, P_{X,Y}) = \sup_{f \in \cF^1_B} \{\Ebb_{(X, \eta)\sim P_X P_{\eta}}
f(X, G(\eta, X)) - \Ebb_{(X, Y)\sim P_{X,Y}} f(X, Y)\},
\]
where $\cF^1_B$ is the uniformly bounded 1-Lipschitz function class,
\begin{align*}
\mathcal{F}^1_B=\{f: \mathbb{R}^{d+q}\mapsto \mathbb{R},
|f(z_1)-f(z_2)|\leq \|z_1-z_2\| , z_1, z_2 \in  \mathbb{R}^{d+q}\ \text{ and }\|f\|_{\infty}\leq B \}
\end{align*}
for some constant $0 < B < \infty $.
The metric $d_{\cF_B^1}$ is also known as the bounded Lipschitz metric ($d_{\text{BL}}$) which metricizes the weak topology on the space of probability distributions. If $P_{X,Y}$ has a bounded support, then $d_{\text{BL}}$ is essentially the same as the 1-Wasserstein distance.

Let $Z=(X,Y)\sim P_{X,Y}$. We make the following assumptions.

\begin{assumption}\label{asp1}
For some $\delta>0$, $Z$ satisfies the first moment tail condition
\begin{align*}
    \Ebb\|Z\|\mathbbm{1}_{\{\|Z\|>\log t \}}= O(t^{-{(\log t)^{\delta}}/{(d+q)}}), \ \text{ for any }
    t \ge 1.
\end{align*}
\end{assumption}

\begin{assumption}\label{asp2}
The noise distribution $P_{\eta}$ is absolutely continuous with respect to the Lebesgue measure.
\end{assumption}

These are two mild assumptions.
Assumption \ref{asp1}
 is a technical condition for dealing with the case when  the support of $P_{X,Y}$ is an unbounded subset of $\mathbb{R}^{d+q}.$
It can be shown that Assumption \ref{asp1}
is equivalent to $P(\|Z\|>t)=
O(1) ( {1}/{t} ) \exp({-t^{1+\delta}/(d+q)}).$
When $P_{X,Y}$ has a bounded support, Assumption 1 is automatically satisfied.
Moreover, this assumption is satisfied if $P_{X,Y}$ is subgaussian.
Assumption \ref{asp2}
is satisfied by commonly used reference distributions
 such as the normal
 distribution.

For the generator network $G_{\vtheta}$, we require that
\begin{align}
\label{cond1}
\|G_{\vtheta}\|_{\infty}\le \log n.
\end{align}
This requirement is satisfied
by adding an additional clipping layer $\ell$ after the original output layer of the network,
\begin{align*}
    \ell(a)=a\wedge c_{n} \vee (-c_{n})=\sigma(a+c_{n})-\sigma(a-c_{n})-c_{n},
\end{align*}
where $c_{n}=\log n.$
We truncate the value of $\|G_{\theta}\|$ to an increasing cube $[-\log n,\log n]^q$ so that the support of the evaluation function to $[-\log n,\log n]^{d+q}.$ This restricts the evaluation function class to a $2\log n$ domain.

\subsection{Non-asymptotic error bound}
\label{errorbound}

Let $(L_1, W_1)$ be the depth and width of the feedforward ReLU discriminator network $D_{\vphi}$ and
let $(L_2. W_2)$ be the depth and width of the feedforward ReLU generator network $G_{\vtheta}$.
Denote the joint distribution of $(X,\hG(\eta,X))$ by $P_{X, \hG}$.
\begin{theorem}
\label{thm1}
Let  $(L_1, W_1)$ of
 $D_{\vphi}$ and $(L_2, W_2)$ of
 $G_{\vtheta}$ be specified such that
$W_1 L_1=\ceil*{\sqrt{n}}$ and $W_2^2 L_2= c q n $ for some constants $12\leq c\leq 384$.
Then, under Assumptions \ref{asp1} and \ref{asp2},
we have
{
\begin{align*}
    \mathbb{E}_{\hG}d_{\cF_B^1}(P_{X,\hat{G}}, P_{X,Y}) \le
    C (d+q)^{{1}/{2}} n^{-{1}/{(d+q)}}\log n,
\end{align*}
where $C$ is a constant independent of $(n, d, q)$. Here $\mathbb{E}_{\hG}$ represents the expectation with respect to the randomness in $\hG.$
}
\end{theorem}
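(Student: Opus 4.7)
The strategy is the standard four-piece error decomposition for IPM-based generative estimators. Denote $\cL(G,D) = \Ebb D(X, G(\eta, X)) - \Ebb D(X, Y)$, and let $G^*$ be a true conditional generator guaranteed by Lemma~\ref{NOLemma}, so that $\sup_{D \in \cF_B^1} \cL(G^*, D) = 0$. Write $\cG_{\vtheta}$ and $\mathcal{D}_{\vphi}$ for the ReLU generator and discriminator classes used in~\eqref{2}. Let $G^\dagger \in \cG_{\vtheta}$ be a best sup-norm approximant of $G^*$ and, for each $D \in \cF_B^1$, let $D^\dagger \in \mathcal{D}_{\vphi}$ be a sup-norm approximant. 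Inserting and subtracting these and using the minimax optimality of $(\hvtheta, \hvphi)$---namely $\sup_{D \in \mathcal{D}_{\vphi}} \cL_n(\hG, D) \le \sup_{D \in \mathcal{D}_{\vphi}} \cL_n(G^\dagger, D)$---telescopes the target quantity into
\begin{align*}
d_{\cF_B^1}(P_{X, \hG}, P_{X, Y}) \;\le\; 4\,\cE_1 + 2\,\cE_2 + 4\,\cE_3,
\end{align*}
where $\cE_1 = \sup_{D \in \cF_B^1} \inf_{D_\phi \in \mathcal{D}_{\vphi}} \|D - D_\phi\|_\infty$ is the discriminator approximation error on the truncated cube, $\cE_2 = d_{\cF_B^1}(P_{X, G^\dagger}, P_{X, G^*})$ is the generator approximation error, and $\cE_3 = \Ebb \sup_{G \in \cG_{\vtheta},\, D \in \mathcal{D}_{\vphi}} |\cL_n(G, D) - \cL(G, D)|$ is the statistical error.

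Each piece is then handled by an off-the-shelf ingredient. For $\cE_1$, I would invoke a Yarotsky/Shen--Yang--Zhang-type ReLU approximation theorem for $1$-Lipschitz functions on the cube $[-\log n, \log n]^{d+q}$, giving a bound of order $(d+q)^{1/2} (W_1 L_1)^{-2/(d+q)} \log n$; plugging in $W_1 L_1 = \lceil \sqrt{n} \rceil$ reproduces the target $(d+q)^{1/2} n^{-1/(d+q)} \log n$. For $\cE_2$, a parallel ReLU approximation of the $(m+d)$-dimensional map $G^*$, combined with the fact that the test functions defining $d_{\cF_B^1}$ are $1$-Lipschitz, gives an error of order $q^{1/2} (W_2^2 L_2)^{-1/(d+q)}$ up to logs, so the choice $W_2^2 L_2 = c q n$ aligns it with the discriminator rate. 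For $\cE_3$, I would use symmetrization together with Dudley's entropy integral and the VC/pseudo-dimension bound for ReLU networks (of order $W_i L_i \log(W_i L_i)$), using that on the truncated domain the integrand is $B$-bounded; the resulting bound is subdominant to $\cE_1 + \cE_2$ under the stated architecture.

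Two technical ingredients deserve the most care. Because $P_{X,Y}$ may be unbounded, before any of the above can be applied one must truncate $Z = (X,Y)$ to the ball $\{\|Z\| \le \log n\}$; Assumption~\ref{asp1} is calibrated so that the tail contribution is of order $n^{-1/(d+q)} \log n$, matching the main rate, and the clipping layer~\eqref{cond1} enforces the analogous boundedness on the generator side automatically. The statistical error additionally requires uniform control over both network classes simultaneously, which I would obtain by symmetrizing on the joint loss class $\{(z,\eta,x) \mapsto D(x, G(\eta,x)) - D(x, y)\}$---this is precisely what the ghost samples in the commented block near~\eqref{objE} are designed to enable---and bounding its Rademacher complexity by a covering-number argument. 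I expect the main obstacle to be tracking the dimension dependence carefully: the Lipschitz approximation carries the intrinsic curse-of-dimensionality exponent $2/(d+q)$, and to end up with only a polynomial $(d+q)^{1/2}$ prefactor rather than an exponential blow-up, the explicit network construction, the truncation radius, and the covering-number bound must all be carried through with sharp dimension-dependent constants.
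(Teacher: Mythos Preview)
Your decomposition is natural but has a genuine gap at the generator approximation step $\cE_2$. The conditional generator $G^*$ supplied by the noise-outsourcing lemma (Lemma~\ref{NOLemma}) is only guaranteed to be Borel measurable---it carries no Lipschitz, H\"older, or even continuity structure---so no ReLU approximation theorem can deliver a sup-norm bound $\|G^\dagger - G^*\|_\infty$ at any rate, and your claimed order $q^{1/2}(W_2^2 L_2)^{-1/(d+q)}$ is unfounded. The paper avoids this obstacle completely by working \emph{empirically} rather than at the population level: its $\cE_2$ is $\inf_\theta \sup_\phi \cL_n'(G_\theta, D_\phi)$, and it is driven to zero by constructing a piecewise-linear ReLU network that exactly interpolates the sample pairs $\eta'_i \mapsto Y'_i$ (Lemmas~\ref{appdis} and~\ref{e2}). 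The specification $W_2^2 L_2 = cqn$ with $12\le c\le 384$ is precisely the interpolation capacity of a width-$W_2$, depth-$L_2$ network for $n$ points in $\Rbb^q$, not an approximation-rate choice.

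This difference propagates to the statistical term. In your $\cE_3 = \Ebb\sup_{G,D}|\cL_n - \cL|$ the supremum runs over both network classes, and with a generator of size $W_2^2 L_2 \asymp qn$ the parameter count---hence the pseudo-dimension of the composite class $\{(x,\eta,y)\mapsto D(x, G(\eta, x))-D(x,y)\}$---is at least of order $n$, so the usual $\sqrt{\text{pdim}/n}$ bound does not vanish and cannot be subdominant. The paper's decomposition (Lemma~\ref{lma2bound}) sidesteps this: its stochastic terms $\cE_3$ and $\cE_4$ are each a supremum over the \emph{evaluation class} $\cF_B^1$ only (not over $\cG_\vtheta$ or $\mathcal{D}_\vphi$), and are controlled respectively by the empirical $1$-Wasserstein rate (Lemma~\ref{lu}) and by Dudley's entropy integral for the Lipschitz class on the $\log n$-cube (Lemmas~\ref{ep} and~\ref{lma9}), both of which yield $\sqrt{d+q}\,n^{-1/(d+q)}$ up to logarithms without any reference to the network sizes. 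The ghost samples are used for the symmetrization behind $\cE_4$, not for a uniform bound over the loss class you describe.
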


When $P_{X,Y}$ has a bounded support, we can  drop the logarithm factor in the error bound.
\begin{theorem}
\label{thm2}
Suppose that $P_{X,Y}$ is supported on $[-M,M]^{d+q}$ for $\infty > M>0$.
Let  $(L_1, W_1)$ of $D_{\vphi}$ and $(L_2, W_2)$ of $G_{\vtheta}$ be specified such that
$W_1 L_1=\ceil*{\sqrt{n}}$ and $W_2^2 L_2= c q n $ for some constants $12\leq c\leq 384$.
 Let the output of $G_{\tht}$ be on $[-M,M]^q$. Then, under Assumption \ref{asp2}, we have
{
\begin{align*}
    \mathbb{E}_{\hG}d_{\cF_B^1}(P_{X,\hat{G}}, P_{X,Y})\le
     C ({d+q})^{{1}/{2}} n^{-{1}/{(d+q)}},
\end{align*}
where $C$ is a constant independent of $(n, d, q)$.
}
\end{theorem}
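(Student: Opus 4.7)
The plan is to follow the same four-term decomposition used for Theorem \ref{thm1} and to observe that the bounded-support hypothesis of Theorem \ref{thm2} lets me drop the truncation step that produced the $\log n$ factor there. Concretely, I would start from
\[
\Ebb_{\hG}\, d_{\cF_B^1}(P_{X,\hG}, P_{X,Y}) = \Ebb_{\hG}\sup_{f\in\cF_B^1}\cL(\hG, f)
\]
and, introducing an idealised network discriminator $\tilde D^{\star}$ that approximates the true sup-achieving $f^\star \in \cF_B^1$ together with an idealised network generator $G_{\vtheta^\star}$ that approximates a noise-outsourcing generator $G^\star$ supplied by Lemma \ref{NOLemma}, split the bound into a discriminator approximation error $\cE_D$, a generator approximation error $\cE_G$, and two symmetric statistical deviations $\cE_{S,1}$ and $\cE_{S,2}$; the defining minimax property of $(\hvtheta,\hvphi)$ in \eqref{2} contributes an optimisation residual that is $\le 0$.

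For $\cE_D$ I would invoke a ReLU approximation theorem (Shen--Yang--Zhang, or Yarotsky) for Lipschitz targets on the compact cube $[-M,M]^{d+q}$: for any $f^\star\in\cF_B^1$ there exists $D_\vphi$ with depth $L_1$ and width $W_1$ satisfying $\|f^\star - D_\vphi\|_\infty \le c_1 (d+q)^{1/2} (W_1 L_1)^{-2/(d+q)}$. Taking $W_1 L_1 = \ceil*{\sqrt n}$ yields $\cE_D = O((d+q)^{1/2} n^{-1/(d+q)})$. For $\cE_G$, Lemma \ref{NOLemma} supplies a measurable $G^\star : \Rbb^m \times \cX \to [-M,M]^q$ with $(X, G^\star(\eta, X)) \stackrel{d}{=} (X, Y)$, and a corresponding ReLU approximation theorem for vector-valued maps into a bounded cube produces $G_{\vtheta^\star}$ of depth $L_2$ and width $W_2$ whose induced joint law satisfies $d_{\cF_B^1}(P_{X, G_{\vtheta^\star}(\eta, X)}, P_{X,Y}) \le c_2 (d+q)^{1/2} (W_2^2 L_2 / q)^{-1/(d+q)}$. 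With $W_2^2 L_2 = cqn$ this also gives $\cE_G = O((d+q)^{1/2} n^{-1/(d+q)})$.

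For the statistical terms I would note that, after a harmless rescaling, the discriminator network class sits inside $c_3 \cF_B^1$ on $[-M,M]^{d+q}$, so each of
\[
\sup_{\vphi} \bigl| \cL(G, D_\vphi) - \cL_n(G, D_\vphi) \bigr| \ \text{ at $G = \hG$ and $G = G_{\vtheta^\star}$}
\]
reduces to a bounded-Lipschitz deviation of an empirical measure from its population counterpart on $\Rbb^{d+q}$. Since $P_{X,Y}$ and every $P_{X, G(\eta, X)}$ in play are supported on a compact cube, a Fournier--Guillin / Weed--Bach concentration bound yields $\Ebb\, d_{\cF_B^1}(P_n^{X,Y}, P_{X,Y}) = O((d+q)^{1/2} n^{-1/(d+q)})$, and the same rate on the $(X, G(\eta, X))$ side; together these control $\cE_{S,1} + \cE_{S,2}$. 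Summing the four contributions produces the asserted bound.

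The main obstacle will be the generator approximation step: I need a ReLU approximation result that yields a \emph{close-in-distribution} (not merely close-in-sup-norm) approximation of the essentially arbitrary measurable $G^\star$ from Lemma \ref{NOLemma}, with exactly the $(W_2^2 L_2 / q)^{-1/(d+q)}$ scaling and the correct $(d+q)^{1/2}$ prefactor. This will likely require first mollifying $G^\star$ against the Lebesgue-absolutely-continuous noise $\eta$ (Assumption \ref{asp2}) to produce a Lipschitz surrogate before invoking the network approximation machinery. Everything else is a direct specialisation of the Theorem \ref{thm1} argument with the truncation layer and its attendant $\log n$ cost removed, which is exactly why the rate cleans up to $(d+q)^{1/2} n^{-1/(d+q)}$.
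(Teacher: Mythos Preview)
Your decomposition into discriminator approximation, generator approximation, and two statistical terms is broadly the same as the paper's, and your handling of $\cE_D$ (via Shen--Yang--Zhang on the cube $[-M,M]^{d+q}$) and the statistical terms (via empirical Wasserstein/bounded-Lipschitz convergence) matches the paper in spirit. The genuine gap is in the generator approximation step $\cE_G$, and you have already flagged it as the ``main obstacle'' --- but the fix you propose does not work, and the paper takes an entirely different route here.

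Your plan is to take the measurable $G^\star$ supplied by Lemma~\ref{NOLemma} and approximate it by a network $G_{\vtheta^\star}$ so that $d_{\cF_B^1}(P_{X,G_{\vtheta^\star}(\eta,X)},P_{X,Y})\le c_2(d+q)^{1/2}(W_2^2L_2/q)^{-1/(d+q)}$. There is no such approximation theorem: the ReLU results you cite require Lipschitz (or at least H\"older) regularity of the target, whereas $G^\star$ is merely Borel measurable and can be arbitrarily irregular. Mollifying against the noise does not rescue the argument, because the mollified map need not be Lipschitz in $x$, and in any case you would have no control over its Lipschitz constant; the resulting approximation rate would not be $(W_2^2L_2/q)^{-1/(d+q)}$. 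This is not a technicality --- it is exactly the obstruction that makes population-level generator approximation infeasible here.

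The paper sidesteps this completely. Its generator error $\cE_2$ is defined at the \emph{empirical} level:
\[
\cE_2=\inf_{\theta}\sup_{\phi}\Big\{\fracsumton D_{\phi}(X'_i,G_{\theta}(\eta'_i,X'_i))-\fracsumton D_{\phi}(X'_i,Y'_i)\Big\},
\]
and the key observation (Lemma~\ref{e2}, via Lemma~\ref{appdis}) is that a ReLU network with $W_2^2L_2=cqn$ can \emph{interpolate} the $n$ sample pairs exactly, i.e.\ send each $\eta'_i$ to $Y'_i$. This gives $\cE_2=0$ outright --- no smoothness of any population-level $G^\star$ is ever invoked, and the size condition $W_2^2L_2=cqn$ is precisely the memorisation capacity needed for $n$ points, not an approximation budget. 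The decomposition is arranged (with ghost samples) so that the remaining randomness is absorbed into the statistical terms $\cE_3,\cE_4$. If you rework your $\cE_G$ as an empirical interpolation statement rather than a population approximation, the rest of your outline goes through.
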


The error bound for the conditional distribution follows as a corollary.

\begin{corollary}
\label{thm3}
Under the same assumptions and conditions of Theorem \ref{thm2}, we have
\begin{align*}
    \mathbb{E}_{\hG}\Ebb_{X} d_{\cF_B^1}(P_{\hat{G}(\eta,X)}, P_{Y\mid X})
    \le
    C ({d+q})^{{1}/{2}} n^{-{1}/{(d+q)}},
\end{align*}
where $C$ is a constant independent of $(n, d, q)$.
\end{corollary}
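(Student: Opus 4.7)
The plan is to deduce the corollary from Theorem \ref{thm2} by a disintegration argument that exploits the fact that $P_{X, \hG}$ and $P_{X, Y}$ share the common $X$-marginal $P_X$. By disintegration,
$$P_{X, \hG} = \int (\delta_x \otimes P_{\hG(\eta, x)}) \, dP_X(x) \quad \text{and} \quad P_{X, Y} = \int (\delta_x \otimes P_{Y \mid X = x}) \, dP_X(x).$$
For any test function $f \in \cF_B^1$ on $\mathbb{R}^{d+q}$, the slice $y \mapsto f(x, y)$ is 1-Lipschitz and bounded by $B$, so belongs to the $\mathbb{R}^q$-version of $\cF_B^1$, and Fubini gives
$$\int f \, d(P_{X, \hG} - P_{X, Y}) = \mathbb{E}_X \int f(X, y) \, d(P_{\hG(\eta, X)} - P_{Y \mid X})(y).$$

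The key step is to establish the inequality
$$\mathbb{E}_X \, d_{\cF_B^1}(P_{\hG(\eta, X)}, P_{Y \mid X}) \le d_{\cF_B^1}(P_{X, \hG}, P_{X, Y}).$$
I would do so by picking, via a measurable selection theorem of Kuratowski--Ryll-Nardzewski type, a family $x \mapsto g_x^{\ast} \in \cF_B^1(\mathbb{R}^q)$ nearly attaining the fiber-wise BL supremum for each $x$. The naive candidate $f^{\ast}(x, y) := g_x^{\ast}(y)$ is 1-Lipschitz in $y$ for each fixed $x$ but need not be jointly Lipschitz. I would therefore regularize in the $x$-variable via a McShane/Kirszbraun-type infimal convolution $f(x, y) := \inf_{x'} \{g_{x'}^{\ast}(y) + \|x - x'\|\}$, which produces a jointly 1-Lipschitz bounded function whose fiber-wise values still approximate $g_x^{\ast}$; inserting it into the joint BL dual recovers the averaged conditional supremum up to a controllable error. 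Combining this inequality with Theorem \ref{thm2} and taking expectation over $\hG$ then produces the stated rate $C(d+q)^{1/2} n^{-1/(d+q)}$.

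The main obstacle is exactly this construction of a jointly Lipschitz test function from the fiber-wise optima. The fiber selections $g_x^{\ast}$ may depend arbitrarily on $x$, so the regularization step must be argued to incur only a vanishing loss; in fact, for two joint distributions sharing an $X$-marginal the averaged conditional BL can generally exceed the joint BL (consider point-mass conditionals on a two-point $X$ support with swapped $Y$-targets, where transport across $X$-slices is cheaper than within-slice transport), so the regularization argument must quantitatively exploit either the bounded-support hypothesis of Theorem \ref{thm2} or implicit regularity of $x \mapsto P_{Y \mid X = x}$. An alternative route, which I would keep in reserve, is to re-derive the error decomposition of Theorem \ref{thm2} directly in the conditional setting (tracking conditional approximation, generator, and statistical errors separately) rather than passing through the joint bound.
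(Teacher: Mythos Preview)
Your plan and the paper's proof target the same reduction: establish
\[
\Ebb_X\, d_{\cF_B^1}\bigl(P_{\hG(\eta,X)},\,P_{Y\mid X}\bigr)\;\le\; d_{\cF_B^1}\bigl(P_{X,\hG},\,P_{X,Y}\bigr)
\]
and then invoke Theorem~\ref{thm2}. The paper works on the \emph{primal} side rather than your dual test-function side: under the bounded-support hypothesis it identifies $d_{\cF_B^1}$ with $W_1$, writes $W_1(P_{\hG(\eta,x)},P_{Y\mid X=x})=\inf_{\gamma_x}\int\|(x,g)-(x,y)\|\,d\gamma_x$, and then asserts $\int\inf_{\gamma_x}(\cdots)\,dP_X\le\inf_{\pi}\int\|(X,\hG)-(X,Y)\|\,d\pi$ with the right-hand infimum over all couplings of the two joint laws. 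So the paper avoids your measurable-selection and Lipschitz-extension machinery entirely.

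Your counterexample, however, is correct and shows that the displayed inequality is \emph{false} in general: with $X$ uniform on $\{0,\varepsilon\}$, $P_{Y\mid X=0}=\delta_0$, $P_{Y\mid X=\varepsilon}=\delta_1$, and the swapped generator $\hG(\cdot,0)\equiv 1$, $\hG(\cdot,\varepsilon)\equiv 0$, one has $\Ebb_X W_1=1$ while the joint $W_1$ is at most $\varepsilon$. No McShane-type regularization can rescue the dual argument here---in this example your $f(x,y)=\inf_{x'}\{g_{x'}^\ast(y)+\|x-x'\|\}$ collapses to a function whose fibrewise integrals are $O(\varepsilon)$, not $1$. The paper's primal step has exactly the same defect: $\int\inf_{\gamma_x}\le\inf_\pi$ holds only when $\pi$ is restricted to couplings that match the $X$-coordinates by the identity, but then the right-hand side is no longer $W_1(P_{X,\hG},P_{X,Y})$; over \emph{all} couplings the inequality runs the other way (gluing the fibrewise optimal $\gamma_x$ gives one admissible $\pi$, so $\inf_\pi\le\int\inf_{\gamma_x}$).

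So the gap you flagged is genuine and is shared by the paper's own argument. Your fallback---re-running the four-term error decomposition of Lemma~\ref{lma2bound} directly at the conditional level---is the honest route; it needs no ideas beyond those already used for Theorem~\ref{thm2} and sidesteps the false inequality entirely.
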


The proofs  of Theorems \ref{thm1} and \ref{thm2} and Corollary \ref{thm3}  are given in the supplementary material.
Assumption 1 only concerns the first moment tail of the joint distribution of $(X, Y)$. Assumption 2 requires the noise distribution to be absolutely continuous with respect to the Lebesgue measure. These assumptions are easily satisfied in practice. Moreover,  we have made clear how the prefactor in the error bound depends on the dimension $d+q$ of $(X, Y)$. This is useful in the high-dimensional settings since how the prefactor depends on the dimension plays an important role in determining the quality of the bound.
Here the prefactors in the error bounds depend on the dimensions $d$ and $q$ through $(d+q)^{1/2}$. The convergence rate is $n^{-1/(d+q)}$.

Unfortunately, these results suffer from the curse of dimensionality in the sense that
the quality of the error bound deteriorates quickly when $d+q$ becomes large.
It is generally not possible to avoid the curse of dimensionality without any conditions
on the distribution of $(X, Y)$.
Detailed discussions on this problem in the context of nonparametric regression using neural networks can be found in \citet{bauer2019deep, schmidt2020nonparametric} and \citet{jiao2021deep} and the references therein. In particular, \citet{jiao2021deep} also provided an analysis of how the prefactor depends on the dimension of $X$.
To mitigate the curse of dimensionality, certain assumptions on the
distribution of $(X, Y)$ is needed. We consider one such assumption below.

\subsection{Mitigating the curse of dimensionality}
\label{cod}
If the joint distribution of $(X, Y)$ is supported on a low dimensional set, we can improve the convergence rate substantially. Low dimensional structure of  complex data have been frequently observed by researchers in image analysis, computer vision and natural language processing, therefore, it is often a reasonable assumption.
We use the Minkowski dimension as a measure of the dimensionality of a set \citep{bishop_peres2016}.

Let $A$ be a subset of $\Rbb^d$. For any $\varepsilon>0$, the covering number  $\mathcal{N}(\epsilon, A, \|\cdot\|_2)$ is the minimum number of balls with radius $\varepsilon$ needed to cover $A$ \citep{vw1996}.
The upper and the lower Minkowski dimensions of  $A\subseteq \Rbb^d$ are defined respectively as
\begin{align*}
  \dim^{u}_M(A)&=\limsup_{\epsilon\to 0}\frac{\log \mathcal{N}(\epsilon, A, \|\cdot\|_2)}{\log (1/\epsilon)},\\
  \dim^{l}_M(A)&=\liminf_{\epsilon\to 0}\frac{\log \mathcal{N}(\epsilon, A, \|\cdot\|_2)}{\log (1/\epsilon)}.
\end{align*}
If $\dim^{u}_M(A)=\dim^{l}_M(A)=\dim_M(A)$, then $\dim_M(A)$ is called the Minkowski dimension or the metric dimension of $A.$

Minkowski dimension measures how the covering number of the set decays when radius of the covering balls goes to 0. When $A$ is a smooth manifold, its Minkowski dimension equals its own topological dimension characterized by the local homeomorphisms. Minkowski dimension can also be used to measure the dimensionality of highly non-regular set, such as fractals \citep{falconer2004fractal}. \cite{nakada2020adaptive} and \cite{jiao2021deep} have shown
that deep neural networks can adapt to the low dimensional structure of the data and mitigate the curse of dimensionality in nonparametric regression.  We show similar results can be obtained for
the proposed method
if the data distribution is supported on a set with low Minkowski dimension.

\begin{assumption}
\label{aspmin}
Suppose $P_{X,Y}$ is supported on a bounded set $A\subseteq [-M,M]^{d+q}$ and $\dim_M(A)=d_{A}\le d+q$.
\end{assumption}

\begin{theorem}
\label{thmlowdim}
Suppose that $P_{X,Y}$ is supported on $[-M,M]^{d+q}$ for $M>0$.
Let  $(L_1, W_1)$ of $D_{\vphi}$ and $(L_2, W_2)$ of $G_{\vtheta}$ be specified such that
$W_1 L_1=\ceil*{\sqrt{n}}$ and $W_2^2 L_2= c q n $ for some constants $12\leq c\leq 384$.
 Let the output of $G_{\vtheta}$ be on $[-M,M]^q$. Let the output of $G_{\vtheta}$ be on $[-M,M]^q$.
 Then, under Assumptions \ref{asp2} and \ref{aspmin},
we have
\begin{align*}
    \mathbb{E}_{\hG}d_{\cF_B^1}(P_{X,\hat{G}}, P_{X,Y})
    \le C ({d+q} )^{{1}/{2}} n^{-{1}/{d_A}},
\end{align*}
where $C$ is a constant independent of $(n, d, q)$.
\end{theorem}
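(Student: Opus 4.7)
The plan is to follow the oracle-type error decomposition that underlies Theorems \ref{thm1} and \ref{thm2}, and to replace every covering-number estimate that involves the ambient cube $[-M,M]^{d+q}$ by the analogous estimate on the support $A$ whose Minkowski dimension is $d_A$. Concretely, I would write
\[
d_{\cF_B^1}(P_{X,\hat{G}}, P_{X,Y}) \le \cE_{\text{gen}} + \cE_{\text{disc}} + \cE_{\text{stat}},
\]
where $\cE_{\text{gen}}$ is the generator-approximation error against a target $G^*$ supplied by Lemma \ref{NOLemma}, $\cE_{\text{disc}}$ is the discriminator-approximation error incurred by restricting $f \in \cF_B^1$ to the ReLU discriminator class, and $\cE_{\text{stat}} = 2 \sup_{G_{\vtheta}, D_{\vphi}} |\cL_n(G_{\vtheta}, D_{\vphi}) - \cL(G_{\vtheta}, D_{\vphi})|$ is the standard empirical-process term. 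Since $P_{X,Y}$ is supported on $A$ and the generator outputs lie in $[-M,M]^q$, all relevant evaluations of $D$ can be assumed to take place either on $A$ or on the image of $\cX \times \mathbb{R}^m$ under $G_{\vtheta}$, which sits inside a cube of side length comparable to $M$.

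First, I bound $\cE_{\text{disc}}$ using the ReLU approximation theorems for functions defined on sets of low Minkowski dimension developed by \citet{nakada2020adaptive} and \citet{jiao2021deep}: since $\mathcal{N}(\epsilon, A, \|\cdot\|_2) \lesssim \epsilon^{-d_A}$, a ReLU network with $W_1 L_1 = \ceil*{\sqrt n}$ can approximate any $f \in \cF_B^1$ on a neighborhood of $A$ within error $O(n^{-1/d_A})$. Second, I bound $\cE_{\text{stat}}$ via a Dudley entropy integral applied to the composed discriminator class restricted to the effective support; the restricted covering numbers scale as $\epsilon^{-d_A}$, and tracking the Rademacher-complexity prefactor exactly as in the proof of Theorem \ref{thm2} yields $\cE_{\text{stat}} = O((d+q)^{1/2} n^{-1/d_A})$. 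Third, $\cE_{\text{gen}}$ is handled using the same noise-outsourcing construction and ReLU approximation of $G^*$ as in Theorem \ref{thm2}: the choice $W_2^2 L_2 = c q n$ makes $\cG_{nn}$ rich enough that this term is dominated by the previous two. Summing and balancing yields the claimed rate $C(d+q)^{1/2} n^{-1/d_A}$; the $\sqrt{d+q}$ prefactor comes from the $(d+q)$-dimensional inputs and outputs of the generator/discriminator, not from the intrinsic geometry of $A$.

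The main obstacle is the discriminator-side piece: one must show that uniformly bounded $1$-Lipschitz functions on a set of Minkowski dimension $d_A$ can be approximated by ReLU networks at a rate governed by $d_A$ rather than $d+q$, while keeping the network size compatible with the prescription $W_1 L_1 = \ceil*{\sqrt n}$ and preserving the $L^\infty$ bound needed for the empirical-process step. The natural approach is to (i) extend $f$ from $A$ to the cube via a McShane-type $1$-Lipschitz extension, (ii) cover $A$ by an $\epsilon$-net of cardinality $\epsilon^{-d_A}$ and build local affine approximants, and (iii) assemble them through a ReLU partition-of-unity constructed from distance functions. Verifying that this construction yields the required width/depth budget, and that the resulting network lies in the hypothesis class used for $\cE_{\text{stat}}$, is the technically delicate step; all other parts are mild variations of the arguments underlying Theorem \ref{thm2}.
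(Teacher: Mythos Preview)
Your decomposition parallels the paper's four-term split $d_{\cF_B^1}\le 2\cE_1+\cE_2+\cE_3+\cE_4$, but the paper handles the two approximation terms quite differently from what you propose. For the generator it does not approximate any population-level $G^*$ coming from Lemma~\ref{NOLemma} (that $G^*$ is only Borel measurable, so no ReLU approximation rate would apply to it); instead Lemma~\ref{e2} constructs a piecewise-linear network that \emph{exactly interpolates} the sample, $G_{\theta_0}(\eta_i',X_i')=Y_i'$ for every $i$, so that $\cE_2=0$ identically and the budget $W_2^2L_2=cqn$ is precisely the interpolation cost. For the discriminator the paper does not invoke any intrinsic-dimension approximation theorem: it applies the ambient bound of Lemma~\ref{lma2shen} on the full cube, and its proof in fact takes $W_1L_1=\lceil n^{(d+q)/(2d_A)}\rceil$ (larger than the $\lceil\sqrt n\rceil$ printed in the statement) so that the ambient rate $(W_1L_1)^{-2/(d+q)}$ already equals $n^{-1/d_A}$. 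The Minkowski-dimension hypothesis enters only through the statistical errors $\cE_3,\cE_4$, where the covering number of the Lipschitz class restricted to $A$ scales like $\epsilon^{-d_A}$ and the Dudley integral of Lemma~\ref{ep} yields $n^{-1/d_A}$.

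Your alternative route for $\cE_{\text{disc}}$ has a genuine gap. The decomposition forces $D_\phi$ to approximate $f$ not only on $A$, where $(X,Y)$ lives, but also at the generated points $(X,\hat G(\eta,X))$ and $(X_i',G_\theta(\eta_i',X_i'))$, and the generator's image is not contained in $A$: it can range over all of $\cX\times[-M,M]^q$, whose intrinsic dimension is the ambient $d+q$. A McShane extension of $f$ from $A$ to the cube preserves the Lipschitz constant, but a ReLU network assembled from a partition of unity over an $\epsilon^{-d_A}$-sized cover of $A$ carries no approximation guarantee \emph{off} $A$; your claim that $W_1L_1=\lceil\sqrt n\rceil$ suffices for accuracy $O(n^{-1/d_A})$ on a ``neighborhood of $A$'' would need that neighborhood to contain all generator outputs, and nothing in the construction forces those outputs near $A$. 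The paper sidesteps this by paying the ambient approximation cost for $\cE_1$ (hence the larger discriminator in its proof) and reserving the Minkowski-dimension gain for the purely statistical terms.
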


In comparison with Theorem \ref{thm2}, where the rate of convergence depends on $d+q$, the convergence rate in Theorem \ref{thmlowdim} is determined by the Minkowski dimension $d_A$. Therefore,  the assumption of a low Minkowski dimension on the support of data distribution alleviates the curse of dimensionality.
{
However,  unless we have better approximation error bounds for Lipschitz functions defined on low Minkowski dimensional set using neural network functions,  the prefactor in Theorem \ref{thmlowdim} is still $(d+q)^{1/2}$, that is, even the convergence rate only depends on the Minkowski dimension $d_A$, the prefactor still depends on the ambient dimension $d+q.$}

\section{Implementation}
\label{implementation}

The estimator $(\hat\vtheta, \hat\vphi)$ of the neural network parameter $(\vtheta, \vphi)$
is the solution to the minimax problem
\begin{align}
\label{MinimaxA}
	(\hat\vtheta, \hat \vphi) = \argmin_{\vtheta}\argmax_{\vphi} \frac{1}{n}\sum_{i=1}^n\{
D_\vphi(X_i,G_{\vtheta}(\eta_i, X_i))-
D_\vphi(X_i,Y_i)\}.
\end{align}
We use the gradient penalty algorithm to impose the constraint that the discriminator belongs to the class of 1-Lipschitz functions \citep{gulrajani2017improved}.
The minimax problem (\ref{MinimaxA}) is solved by
updating $\vtheta$ and $\vphi$ alternately as follows:
\begin{itemize}
	\item[(a)] Fix  $\vtheta$,  update the discriminator by maximizing the empirical objective function
	\begin{eqnarray*}
		\hat\vphi = \argmax_{\vphi} \frac{1}{n}\sum_{i=1}^n \big\{
D_\vphi(X_i, \hat G_\vtheta(\eta_i, X_i))-
D_\vphi(X_i,Y_i)
-\lambda (\|\nabla_{(x, y)}D_\vphi(X_i,Y_i)\|_2-1)^2\big\}
	\end{eqnarray*}
 with respect to $\vphi$, where $\nabla_{(x,y)}D_\vphi(X_i,Y_i)$ is the gradient of
 $D_\vphi(x, y)$ with respect to $(x, y)$ evaluated at $(X_i, Y_i).$
  The third term on the right side is the gradients penalty for the Lipschitz condition on the discriminator  \citep{gulrajani2017improved}.

	\item[(b)] Fix  $\vphi$,  update the generator by minimizing the empirical objective function
	 \begin{eqnarray*}
		\hat\vtheta = \argmin_{\vtheta}\frac{1}{n}\sum_{i=1}^n
\hat D_\vphi(X_i,G_\vtheta(\eta_i, X_i))
	\end{eqnarray*} with respect to $\vtheta$.
\end{itemize}
We implemented this algorithm in TensorFlow \citep{abadi2016tensorflow}.

We also implemented the weight clipping method for enforcing the Lipschitz condition on the discriminator \citep{arjovsky17}.  With weight clipping, all weights in the discriminator is truncated to be between $[-c,c],$ where $c>0$  is a small number.
We found that the gradient penalty method is more stable and converges faster.  So we only report the numerical results based on the gradient penalty method below.

\section{Numerical experiments}
\label{experiments}

We carry out numerical experiments to evaluate the finite sample performance of
the proposed method described in Section \ref{wgcs}
and illustrate its applications using examples in conditional sample generation,  nonparametric conditional density estimation, visualization of multivariate data, and image generation and reconstruction.
Additional numerical results are provided in the supplementary material.
For all the experiments below, the noise random vector $\eta$ is generated from a standard multivariate normal distribution.
We implemented
the proposed method in
TensorFlow \citep{abadi2016tensorflow} and used the stochastic gradient descent algorithm Adam \citep{kingma2015adam} in training the neural networks.

\subsection{Conditional sample generation: the two-moon dataset}
We use the two-moon data example to illustrate the use of the proposed method for
generating conditional samples.
Let $X\in \{1,2\}$ be the class label and let $Y\in \mathbb{R}^2$. The two-moon model is
\begin{align}
\label{2moon}
Y = \begin{cases}
(\cos(\alpha) + \frac{1}{2} + \epsilon_1, \sin(\alpha) - \frac{1}{6} + \epsilon_2) ,& \text{if $X=1$,} \\
(\cos(\alpha) - \frac{1}{2} + \epsilon_3, -\sin(\alpha) + \frac{1}{6} + \epsilon_4), & \text{if $X=2$,}
\end{cases}
\end{align}
\text{where } $ \alpha\sim \text{Uniform}[0,\pi]$, $\epsilon_1,\ldots, \epsilon_4$ are independent and identically distributed as $N(0,\sigma^2).$
We generate three sets of random samples of size $n=5,000$ with $2,500$ for each class
and $\sigma=0.1, 0.2$ and 0.3 from this model. The generated datasets are shown in the first row of Figure \ref{fig:twomoon}.

 \begin{figure}[H]
	\centering
	\includegraphics[width=4.0in, height=2.0 in]{./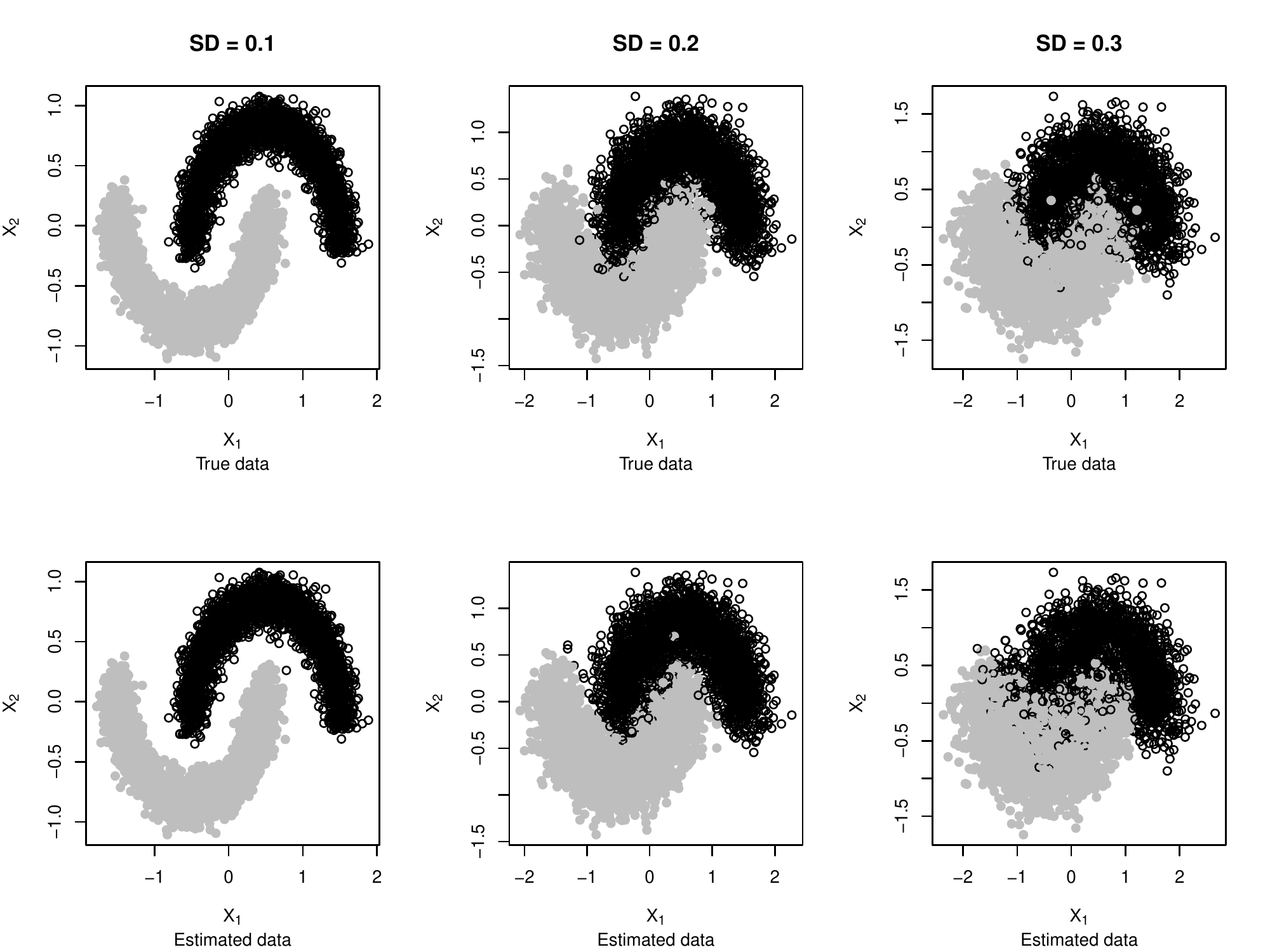}
	\caption{
Comparison of simulated data from (\ref{2moon}) and estimated sample using the proposed method.
From the left to the right, each columns corresponds to a different value of the standard deviation
in the model with $\sigma=0.1, 0.2$ and 0.3, respectively. The first row displays true data and the second row displays the estimated samples.}
	\label{fig:twomoon}
\end{figure}

We use the simulated data to estimate the conditional generator, which is parameterized as
a two-layer fully-connected  network with 30 and 20 nodes.
The discriminator is also a two-layer fully-connected  network with 40 and 20 nodes.
The noise $\eta \sim N(\mathbf{0}, \mathbf{I}_2).$
The activation function for the hidden layer of the generator and the discriminator is ReLU.
For the output layer of the generator, the activation function is the hyperbolic tangent function.
The estimated samples  $\{\hat G(\eta_j, x), j=1. \ldots, 5,000\}$, $x \in \{1, 2\},$ are shown in the second row of Figure \ref{fig:twomoon}.  It can be seen that the scatter plots of the estimated samples are similar to those of the simulated data. This provides a visual validation of the estimated samples in this toy example.

\subsection{Nonparametric conditional density estimation}
We consider the problem of estimating conditional mean and conditional standard deviation in
nonparametric conditional density models. We also compare
the proposed Wasserstein generative conditional sampling method, referred to as
WGCS in Table \ref{tab:point2},
with three existing conditional density estimation methods, including the
nearest neighbor kernel conditional density estimator (NNKCDE) \citep{dalmasso2020conditional}, the conditional kernel density estimator (CKDE, implemented in \texttt{R} package \texttt{np} \citep{hall2004cross},  and a basis expansion method FlexCode \citep{izbicki2017converting}.
We simulated data from the following three models:

\begin{itemize}
\item Model  1 (M1). A nonlinear model with an additive error term:
\begin{align*}Y =  X_1^2 + \exp(X_2 + X_3/3) + \sin(X_4 + X_5) + \varepsilon,\  \varepsilon\sim N(0,1).
\end{align*}
\item Model 2 (M2). 
A model with a multiplicative non-Gassisan error term:
\begin{align*}
Y =(5 +X_1^2/3 + X_2^2 + X_3^2+X_4+X_5) *\exp(0.5\times\varepsilon),
\end{align*}
where $\varepsilon\sim 0.5N(-2,1)+0.5 N(2,1).$
\item Model 3 (M3).
A mixture of two normal distributions:
\begin{align*}Y = 
\mathbb{I}_{\{U\le {1}/{3}\}}N(-1 - X_1 - 0.5X_2, 0.5^2) +
\mathbb{I}_{\{U>{1}/{3}\}}N( 1 + X_1 + 0.5X_2, 1),
\end{align*}
where $U \sim \text{Unif}(0,1)$ and is independent of $X$.
\end{itemize}
In each model, the covariate vector  $X$ is generated from $N(\mathbf{0}, \mathbf{I}_{100})$.
So the ambient dimension of $X$ is 100,  but (M1) and (M2) only depend on the first 5 components of $X$ and (M3) only depends on the first 2 components of $X$.  The sample size $n=5,000$.

For the proposed method, the conditional generator $G$ is parameterized using a
{one-layer} neural network in models (M1) and (M2); it is parameterized by a two-layer fully connected neural network in (M3). The discriminator $D$ is parameterized using a
{two-layer} fully connected neural network. The noise vector $\eta \sim N(0, 1).$

For the conditional density estimation method NNKCDE, the tuning parameters are chosen using cross-validation. The bandwidth of the conditional kernel density estimator CKDE is determined
based on the standard
formula $h_j = 1.06\sigma_j n^{-1/(2k+d)},$ where $\sigma_j$ is a
measure of spread of the $j$th
variable,
$k$ the order of the kernel, and $d$ the dimension of $X$.
The basis expansion based method FlexCode uses 
Fourier basis. The maximum number of bases is set to 40 and the actual number of bases is selected using cross-validation.

\begin{table}[H]
	\centering
	\begin{tabular}{crrrrr} \hline
& &  WGCS &  NNKCDE & CKDE & FlexCode \\ \hline
		\multirow{2}{*}{M1}&Mean &  \textbf{1.10}(0.05) &  2.49(0.01) & 3.30(0.02) & 2.30(0.01) \\
		&SD &  \textbf{0.24}(0.04) & 0.43(0.01) & 0.59(0.01) & 1.06(0.08)\\ \hline
		\multirow{2}{*}{M2}&Mean & \textbf{3.71}(0.23) & 6.09(0.07) & 66.76(2.06) & 10.20(0.33)\\
		&SD & \textbf{3.52}(0.17) &9.33(0.23) & 18.87(0.59) & 11.08(0.34)\\ \hline
		&Mean & 0.32(0.03) & \textbf{0.11}(0.01) & 1.55(0.03) & 0.12(0.04) \\
		\multirow{-2}{*}{M3}&SD & \textbf{0.10}(0.01) &  0.36(0.01) & 0.51(0.01) & 0.33(0.01)\\ \hline
\end{tabular}\\

\medskip
\caption{Mean squared error (MSE) of the estimated conditional mean, the estimated standard deviation and the corresponding simulation standard errors (in parentheses).
The bold numbers indicate the smallest MSEs.
NNKCDE: nearest neighbor kernel conditional density estimator;
CKDE: conditional kernel density estimator;  FlexCode: basis expansion method;
WGCS: Wasserstein generative conditional sampler.
}
	\label{tab:point2}
\end{table}

We consider the mean squared error (MSE) of the estimated conditional mean $\mathbb{E}(Y|X)$ and the estimated conditional standard deviation $\text{SD}(Y|X)$.
We use a test data set $\{x_1,\dots, x_{K}\}$ of size $K=2,000$ .
For the proposed method, we first generate $J=10,000$ samples $\{\eta_j:j=1, \ldots, J\}$ from the reference distribution $P_{\eta}$ and calculate conditional samples  $\{\hat G(\eta_j, x_k), j = 1,\dots, J, k=1, \ldots, K\}$
The estimated conditional standard deviation is calculated as the sample standard deviation of the conditional samples. The MSE of the estimated conditional mean is
$
\text{MSE(mean)}= (1/K)\sum_{k=1}^{K}\{\hat{\mathbb{E}}(Y|X=x_k) - \mathbb{E}(Y|X=x_k)\}^2.
$
For the proposed method,
the estimate of the conditional mean 
is obtained
using Monte Carlo. For other methods, the estimate is calculated by numerical integration.
Similarly, the MSE of the estimated conditional standard deviation is
$ \text{MSE(sd)} = (1/K)\sum_{k=1}^{K}\{\hat{\text{SD}}(Y|X=x_k) -\text{SD}(Y|X=x_k)\}^2.
$
The estimated conditional standard deviation of other methods are computed by numerical integration.

We repeat the simulations 10 times.  The average MSEs and simulation standard errors are summarized in Table \ref{tab:point2}. Comparing with CKDE and FlexCode and NNKCED. WGCS has the smallest MSEs for estimating conditional mean and conditional SD in most cases.

\subsection{Prediction interval: the wine quality dataset}
We use the wine quality dataset \citep{cortez200} to illustrate the application of the proposed method to prediction interval construction.
This dataset is available at UCI machine learning repository
(\url{https://archive.ics.uci.edu/ml/datasets/Wine+Quality}).
There are eleven physicochemical quantitative variables including \textit{fixed acidity, volatile acidity, citric acid, residual sugar, chlorides, free sulfur dioxide, total sulfur dioxide, density, pH, sulphates, alcohol} as predictors and a sensory quantitative variable \textit{quality} that measures wine quality, which is a score between 0 and 10. The goal is to build a prediction model for the wine \textit{quality} score based on the eleven physicochemical variables. Such a model can be used to help the oenologist wine tasting evaluations and improve wine production.
We use the proposed method to learn the conditional distribution
of \textit{quality} given the eleven variables. An advantage of the proposed method over the standard nonparametric regression is that it provides a prediction interval for the quality score, not just a point
prediction.

The sample size of this dataset is 4,898. We use 90\% of the samples as the training set  and 10\% as the test set. All variables are centered and standardized first before training the model.
In our proposed method, the conditional generator is a two-layer fully-connected feedforward network with 50 and 20 nodes,
the discriminator is a two-layer network with 50 and 25 nodes. The ReLU activation function is used in both networks. The noise $\eta\sim N(\mathbf{0}, \mathbf{I}_3).$

\begin{figure}[H]
\centering
\includegraphics[width=5.0 in, height=1.8 in]{./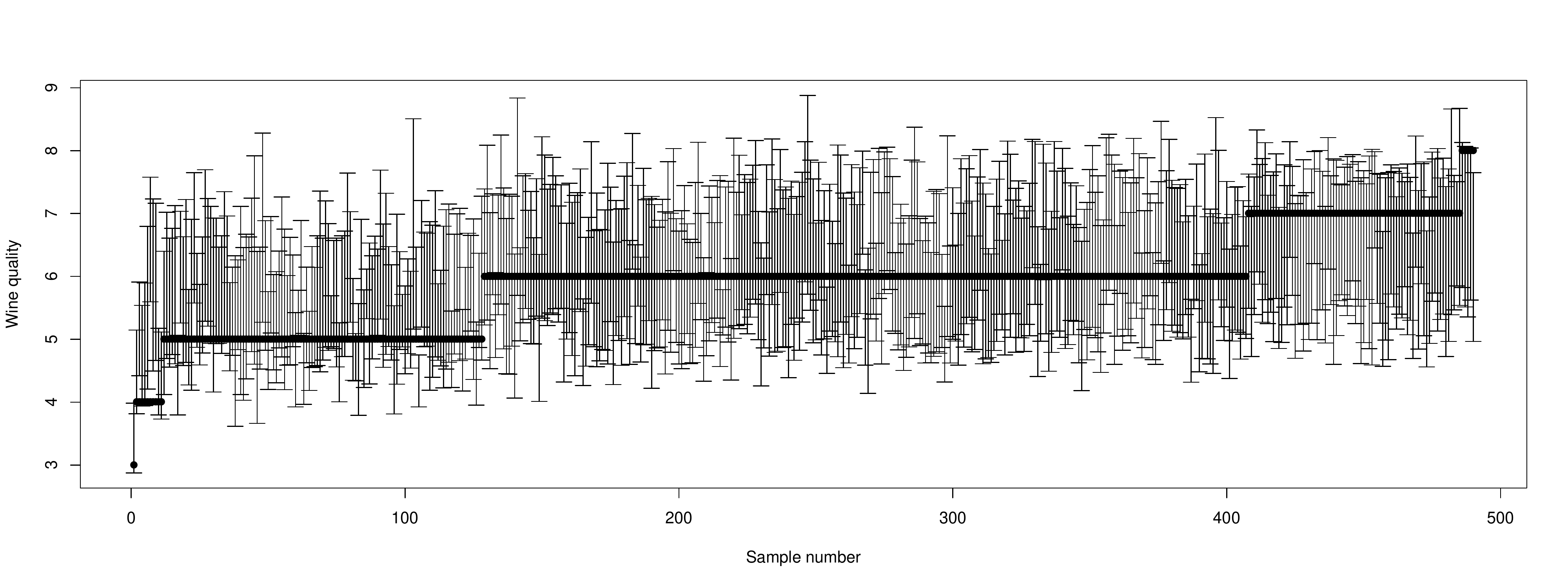}
	\caption{The prediction intervals for the test set.  The actual \textit{quality} scores of 100 randomly selected wine samples in the test set
are shown as the solid dots.}
	\label{fig:wq1}
\end{figure}
To examine the prediction performance of the proposed method, we construct the 90\% prediction interval for the wine quality score in the test set. The prediction intervals are shown in Figure \ref{fig:wq1}. The actual quality scores are plotted as solid dots. The actual coverage for all 490  in the test set is 90.80\%, close to the nominal level of 90\%.

\subsection{
Bivariate response: California housing data}
We use the California housing dataset to demonstrate that the generated conditional samples from the proposed method can be used for visualizing conditional distributions of bivariate response data.
This dataset is available at StatLib repository
(\url{http://lib.stat.cmu.edu/datasets/}).
 It contains 20,640 observations on housing prices with 9 covariates including \textit{median house value, median income, median house age, total rooms, total bedrooms, population, households, latitude, and longitude}. Each row of the data matrix represents a neighborhood block in California.
We use the logarithmic transformation of the variables except
 \textit{longitude, latitude} and \textit{median house age}.
All  columns of the data matrix are centered and standardized to have zero mean and unit variance.
The generator is a two-layer fully-connected neural network with 60 and 25 nodes
and the discriminator is a two-layer network with 65 and 30 nodes. ReLU activation is used for the hidden layers. The random noise vector $\eta\sim N(\mathbf{0}, \mathbf{I}_4)$.

We train the WGCS model with (\textit{median income, median house value}) $\in  \mathbb{R}^2$
as the response vector and other variables as the predictors in the model. We use 18,675 observations (about the 90\% of the dataset) as training set and the remaining 2,064 (about 10\% of the dataset)
observations as the test set.

 \begin{figure}[H]
	\centering
	\includegraphics[width=5.0 in, height=1.0 in]{./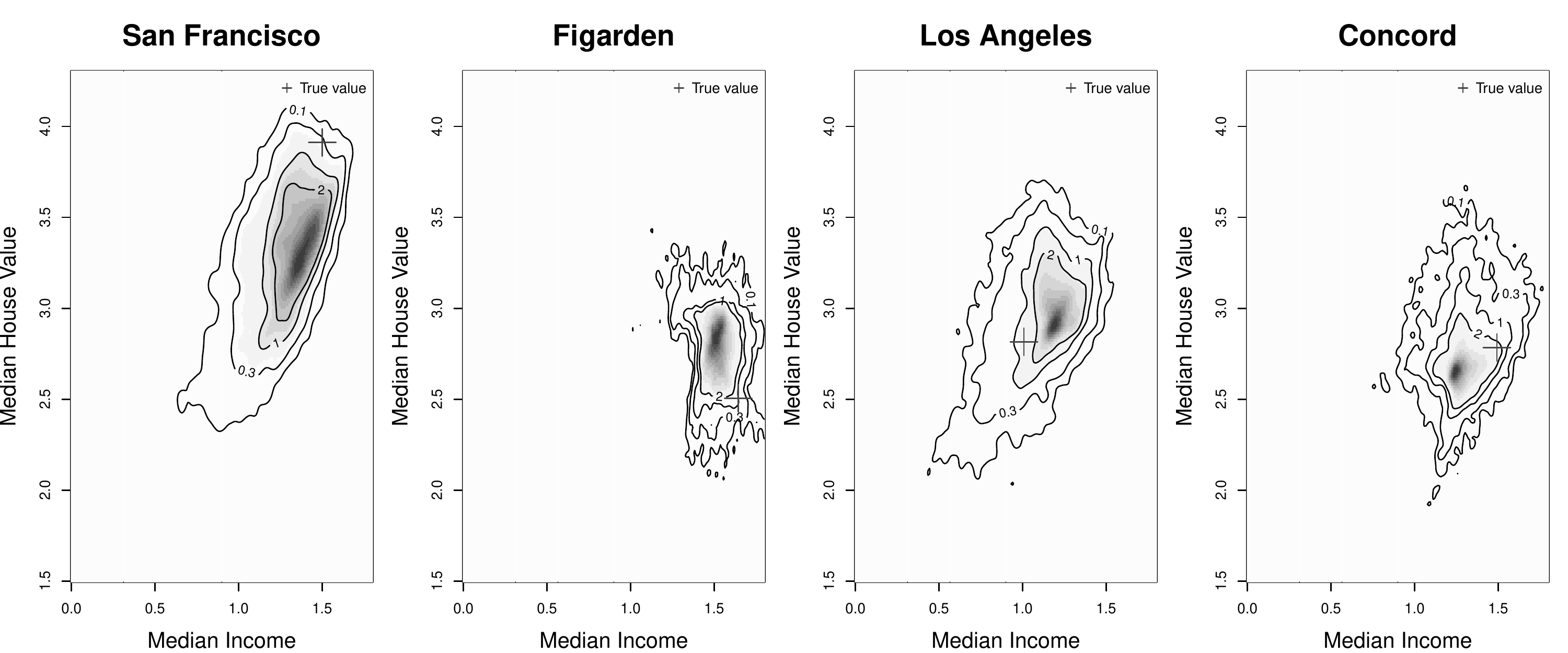}\\
\includegraphics[width= 5.0 in, height=1.0 in]{./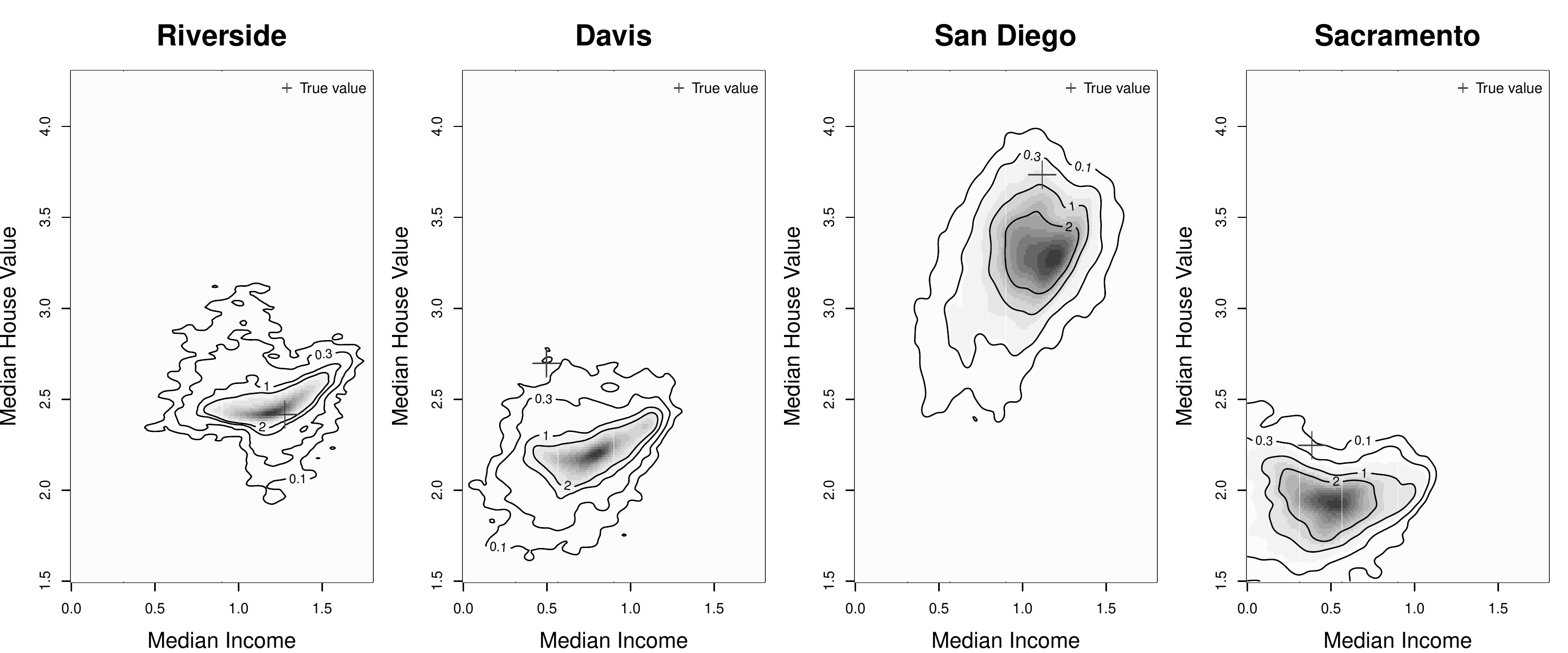}
	\caption{Kernel density estimates of
generated conditional samples of median income and median house value (in logarithmic scale) for eight neighborhood blocks in the California Housing dataset.
The plus sign ``$+$''  represents the observed median income and median house values.}
	\label{fig:calsingle}
\end{figure}

Figure \ref{fig:calsingle} shows the contour plot of the conditional distributions of median income and median house value (in logarithmic scale) for 8 single blocks in the test set. The colored density function represents the kernel density estimates based on the samples generated using the proposed method.
We see that the blue cross is in the main body of the plot, which shows that the proposed method
 provides reasonable prediction of the median income and house values.
We also see that big cities like San Francisco and Los Angles have higher house values with larger variations, smaller cities such as Davis and Concord tend to have lower house values.

\subsection{Image reconstruction: MNIST dataset}
We now illustrate the application of
the proposed method
 to high-dimensional data problems and demonstrate that it can easily handle the models when either of both of $X$ and $Y$ are high-dimensional.
We use
the MNIST handwritten digits dataset \citep{mnist2010},  which contains 60,000 images for training and 10,000 images for testing. The images are stored in $28\times28$ matrices with gray color intensity from 0 to 1. Each image is paired with a label in $\{0,1\dots,9\}$.
We use WGCS to perform two tasks: generating images from labels and reconstructing the missing part of an image.

We  illustrate using the proposed method
for image reconstruction when part of the image is missing with the MNIST dataset.
Suppose we only observe ${1}/{4}, {1}/{2}$ or ${3}/{4}$ of an image and would like to reconstruct the missing part of the image.
For this problem, let $X$ be the observed part of the image
and let $Y$ be the missing part of the image. Our goal is to reconstruct $Y$ based on $X$.
For discriminator, we use two convolutional networks to process  $X$ and $Y$ separately. The filters are then concatenated together and fed into another convolution layer and fully-connected layer before output. For the generator, $X$ is processed by a fully-connected layer followed by 3 deconvolution layers. The random noise vector $\eta \sim N(\mathbf{0}, \mathbf{I}_{100}).$

\begin{figure}[H]
	\centering
	\includegraphics[width=5.0in, height=1.8 in]{./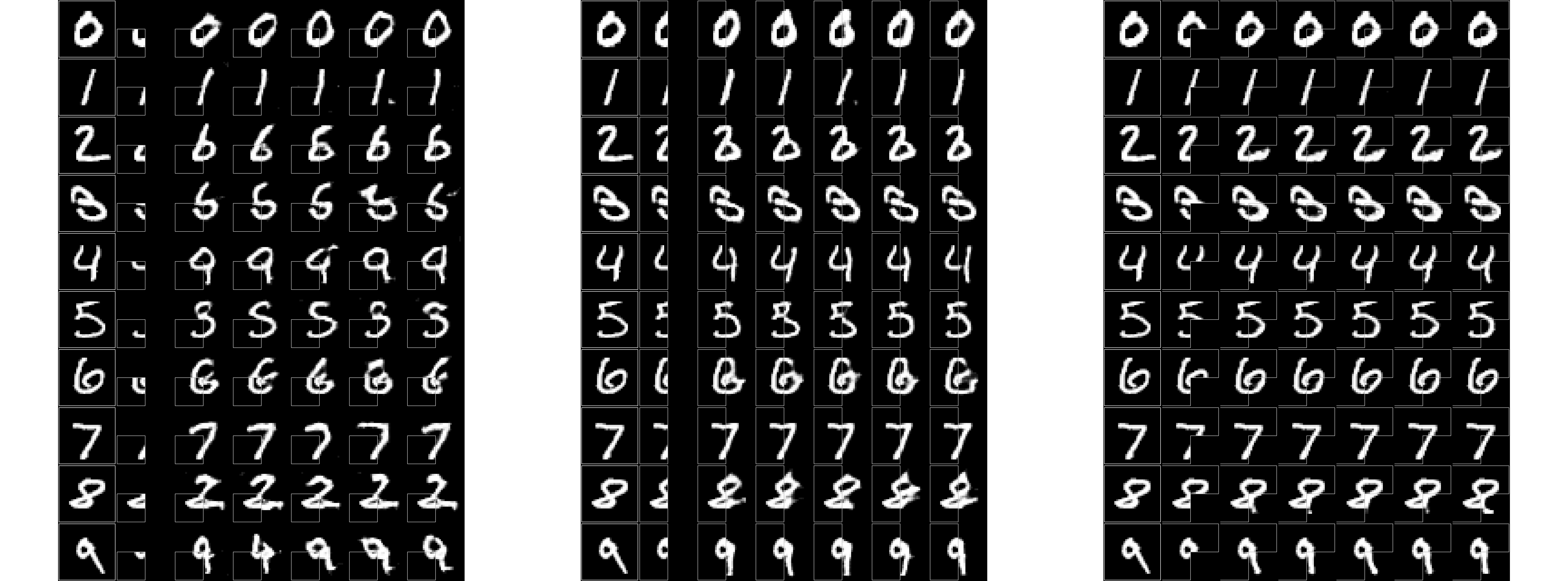}
	\caption{Reconstructed images given partial image in MNIST dataset. The first column in each panel consists of the true images, the other columns give the constructed images.
In the left panel, the left lower 1/4 of the image is given;  in the middle panel, the left 1/2 of the image is given; in the right panel, 3/4 of the image is given.}
	\label{fig:mnist_ximage}
\end{figure}

In Figure \ref{fig:mnist_ximage}, three panels from left to right correspond to the situations when ${1}/{4}, {1}/{2}$ or ${3}/{4}$ of an image are given, the problem is to reconstruct the missing part of the image. In each panel, the first column
contains the true images in the testing set; the second column shows the observed part of the image; the third to the seventh columns show the reconstructed images.
The digits ``0'', ``1'' and ``7'' are easy to reconstruct, even when only 1/4 of their images are given.
For the other digits, if only 1/4 of their images are given, it is difficult to reconstruct them.  However, as the given part increases from 1/4 to 1/2 and then 3/4 of the images,
the proposed method is able to reconstruct the images correctly, and the reconstructed images become less variable and more similar to the true image. For example, for the digit ``2'', if only the left lower 1/4 of the image is given, the reconstructed images tend to be incorrect; the reconstruction is only successful when 3/4 of the image is given.

\subsection{Image generation: CelebA dataset}
We illustrate the application of
the proposed method
to the problem of image generation given label information
with the CelebFaces Attributes (CelebA) dataset \citep{liu2015faceattributes}. This dataset contains more then 200K colored celebrity images with 40 attributes annotation. Here we use 10 binary features, including: \textit{Gender, Young, Bald, Bangs, Blackhair, Blondhair, Eyeglasses, Mustache, Smiling, WearingNecktie}. We  take these features as $X$. So $X$ is a vector consisting of 10 binary components. We consider six types of images; the attributes of these six types are shown in Table \ref{tab:celeba_type}. We used the aligned and cropped images and further resize them to $96\times96$ as our training set. We take these colored images as the values of $Y$. Therefore, the dimension of $X$ is 10 and the dimension of $Y$ is $96\times96\times3=27,648$.

\begin{table}[H]
	\centering
	\begin{tabular}
{|c|C{1.0cm}C{0.8cm}C{1.0cm}C{1.0cm}C{0.8cm}C{0.8cm}C{0.7cm}C{0.7cm}C{1cm}C{1cm}|}
		\hline
		Attributes & Gender & Young&Blkhair&Bldhair&Glass&Bald&Mus&Smile& Necktie&Bangs\\
		\hline
		Type 1 &F &Y&N&N&N&N&N&Y&N&N\\
		Type 2 &F&Y&N&Y&N&N&N&Y&N& N\\
		Type 3 &F&Y&Y&N&N&N&N&N&N& N\\
		Type 4 &M&Y&N&N&N&N &N&N&N&N\\
		Type 5 &M&N&N&N&N&N&N&Y &N&N\\
		Type 6 &M&Y&Y&N&N&N&N&Y&N&N\\
		\hline
	\end{tabular}
	\caption{Attributes for six types of face images in the CelebA dataset:  F$=$Female,  M$=$Male; Y$=$ Yes, N$=$ No.
}
	\label{tab:celeba_type}
\end{table}

The architecture of the discriminator is specified as follows: the 10 dimensional one-hot vector is first expanded to $96\times96\times 10$ by replicating each number to a $96\times 96 $matrix. Then it is concatenated  with image $Y$ on the last axis. Thus the processed data has dimension $96\times96\times 13$. This processed data is then fed into 5 consecutive convolution layers initialized by truncated normal distribution with $SD = 0.01$ with 64,128,256,512,1024 filters respectively. The activation for each convolution layer is a leaky ReLU with $\alpha = 0.2$. The strides is 2 and the kernel size is 5. After convolution, it is flattened and connected to a dense layer with one unit as output.

The architecture of the conditional generator is as follows: the noise vector and the feature vector $X$ are concatenated and fed to a dense layer with $3\times3\times 1024$ units with ReLU activation and then reshaped to $3\times3\times 1024$. Then it goes through 5 layers of deconvolution initialized by truncated normal distribution with $SD = 0.01$ with 512,256,128,64,3 filters respectively. The activation for each intermediate convolution layer is ReLU and hyperbolic tangent function for the last layer. The strides is 2 and kernel size is 5.
The optimizer is Adam with $\beta_1 = 0.5$ and learning rate decrease from $0.0001$ to $0.000005$. The noise dimension is 100.

\begin{figure}[H]
	\centering
	\includegraphics[width=5.0 in, height=2.4 in]{./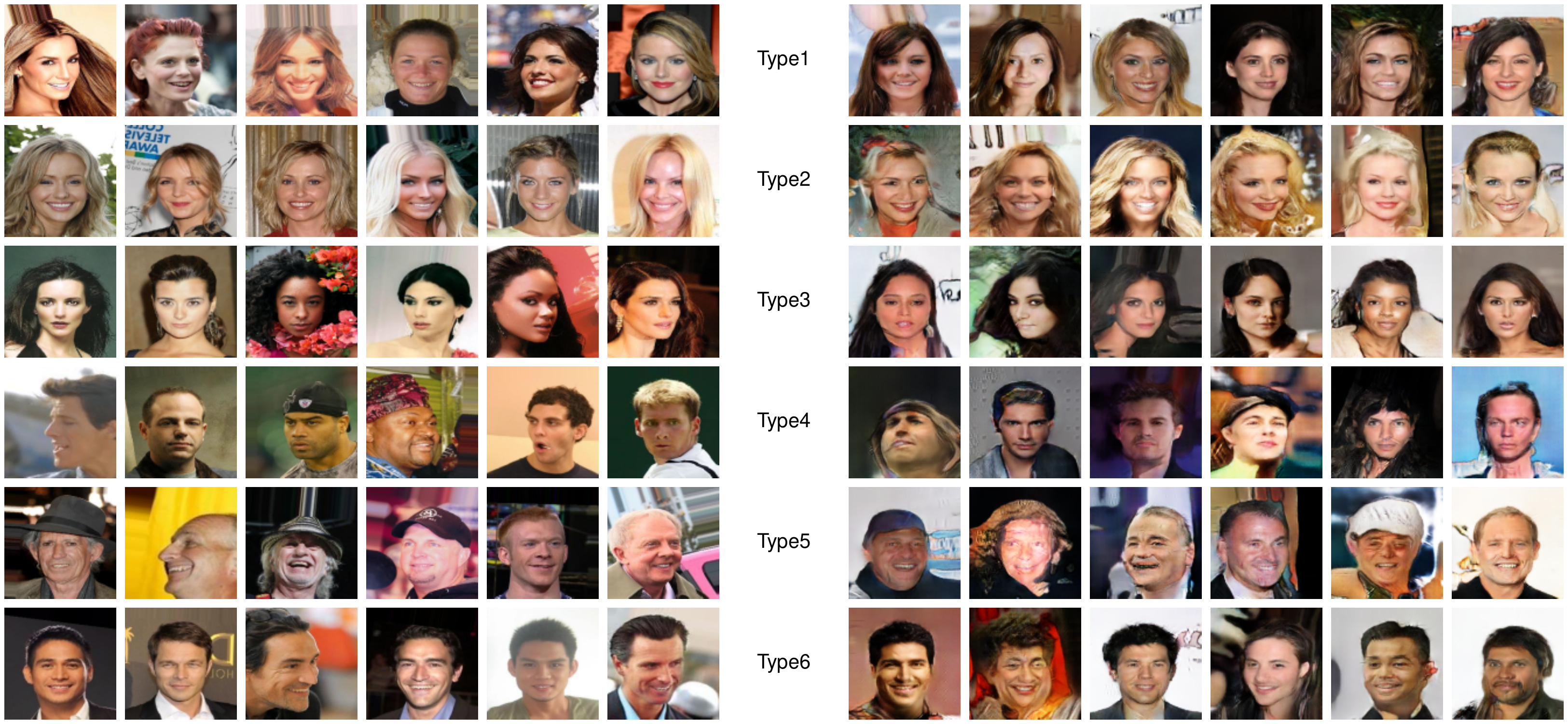}
	\caption{The left panel shows the true images in CelebA dataset. The right panel consists of generated images. Each row corresponds to a specific type of faces.}
	\label{fig:celebatypes}
\end{figure}

In Figure \ref{fig:celebatypes}, the left panel shows real images and  the right panel shows  generated images. Each row corresponds to a specific type of face.
The attributes of the six types of face images are given in Table \ref{tab:celeba_type}.
We can see that the generated images have similar qualities as the original ones.

\section{Discussion}
\label{conclusion}
We have proposed a Wasserstein conditional sampler, a generative approach to learning a conditional distribution. We establish the convergence rate of the sampling distribution of the proposed method.
We also show that the curse of dimensionality can be mitigated if  the data distribution is supported on a lower dimensional set. Our numerical experiments demonstrate that the proposed method performs well in a variety of settings from
standard 
conditional density estimation to more complex image generation and reconstruction problems.

In the future work, it would be interesting to consider incorporating additional information such as sparsity and latent low dimensional structure of data
in the proposed method to better deal with the curse of dimensionality in the high-dimensional settings.

\section*{Acknowledgements}
The work J. Huang is supported in part by the U.S. National Science Foundation grant DMS-1916199.
The work of Y. Jiao  is supported in part by the National Science Foundation of China  (No. 11871474)  and by the research fund of KLATASDSMOE of China.


\newpage
\bibliographystyle{apalike}
\bibliography{wgcs_bib.bib}

\newpage
\setcounter{equation}{0}  
\renewcommand{\theequation}{B.\arabic{equation}}
\setcounter{table}{0}
\renewcommand{\thetable}{A.\arabic{table}}
\setcounter{figure}{0}
\renewcommand{\thefigure}{A.\arabic{figure}}
\setcounter{equation}{0}  

\begin{appendices}
\noindent
\textbf{\LARGE Appendices}

\bigskip\noindent
In the appendices, we include additional numerical results, proofs of the theorems stated in the paper and technical details.

\section{Additional numerical results}
\label{experiments2}

We carry out numerical experiments  to evaluate the finite sample performance of WGCS and illustrate its applications using examples in conditional sample generation,  nonparametric conditional density estimation, visualization of multivariate data, and image generation and reconstruction.
For all the experiments below, the noise random vector $\eta$ is generated from a standard multivariate normal distribution. The discriminator is updated five times per one update of the generator.
We implemented WGCS in
TensorFlow \citep{abadi2016tensorflow} and used the stochastic gradient descent algorithm Adam \citep{kingma2015adam} in training the neural networks.

\subsection{Conditional prediction: the abalone dataset}
The abalone dataset is available at UCI machine learning repository \citep{Dua:2019}.
It contains the number of rings of abalone and other physical measurements. The age of abalone is determined by cutting the shell through the cone, staining it, and counting the number of rings through a microscope, a time-consuming process. Other measurements, which are easier to obtain, are used to predict the number of rings that determines the age.
This dataset contains 9 variables. They are \textit{sex, length, diameter, height, whole weight, shucked weight, viscera weight, shell weight} and \textit{rings}.
Except for the categorical variable \textit{sex}, all the other variables are continuous. The variable \textit{sex} codes three groups: female, male and infant, since the gender of an infant abalone is not known.
We take \textit{rings} as the response $Y \in \mathbb{R}$ and the other measurements as the covariate vector $X \in \mathbb{R}^9$.
The sample size is 4,177. We use 90\% of the data for training and 10\% of the data as the test set.

The generator is a two-layer fully-connected network with 50 and 20 nodes
and the discriminator is also a two-layer network with 50 and 25 nodes, both with ReLU activation function. The noise $\eta\sim N(\mathbf{0}, \mathbf{I}_3).$

\begin{figure}[H]
\centering
\includegraphics[width=5.0 in, height=1.8 in]{./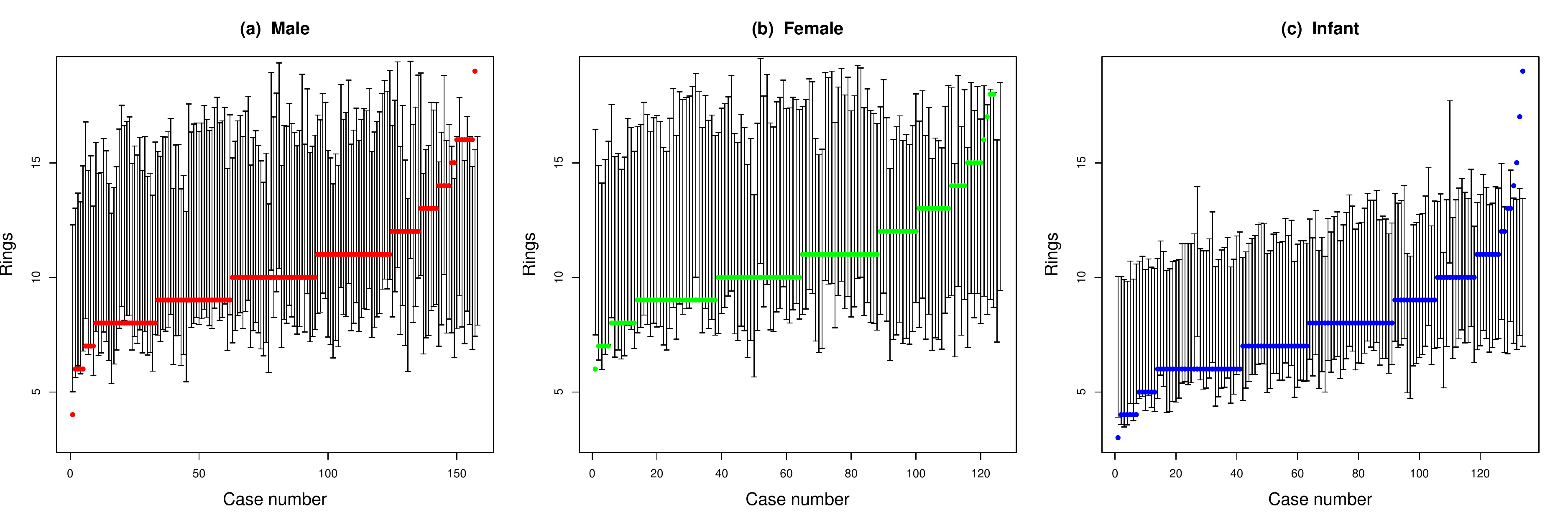}
	\caption{The prediction intervals for the test set.  The 418 abalones in the test set are divided into three groups, (a) male, (b) female,  and (c) infant.}
	\label{fig:abaringsingle}
\end{figure}
To examine the prediction performance of the estimated conditional density, we construct the 90\% prediction interval for the number of rings of each abalone in the testing set. The prediction intervals are shown in Figure \ref{fig:abaringsingle}. The actual number of rings are plotted as a solid dot. The actual coverage for all 418 cases in the testing set is 90.90\%, close
to the nominal level of 90\%. The numbers of rings that are not covered by the prediction intervals are the largest ones in each group.
This dataset was also analyzed in \cite{zjlh2021} using a generative method with the Kullback-Liebler divergence measure. The results are similar.

\subsection{Image generation: MNIST dataset}
We now illustrate the application of  WGCS to high-dimensional data problems and demonstrate that it can easily handle the models when either of both of $X$ and $Y$ are high-dimensional.
We use
the MNIST handwritten digits dataset \citep{mnist2010},  which contains 60,000 images for training and 10,000 images for testing. The images are stored in $28\times28$ matrices with gray color intensity from 0 to 1. Each image is paired with a label in $\{0,1\dots,9\}$.
We use WGCS to perform two tasks: generating images from labels and reconstructing the missing part of an image.

We generate images of handwritten digits given the label. In this problem,  the predictor $X$ is a categorical variable representing the ten digits: $\{0,1,\dots,9\}$ and the response $Y$ represents $28 \times 28$ images.
We use one-hot vectors in $\mathbb{R}^{10}$ to represent these ten categories. So the dimension of $X$ is 10
and the dimension of  $Y$ is $28\times 28=784.$
The response $Y\in [0,1]^{28\times28}$ is a matrix representing the intensity values. For the discriminator $D$, we use a convolutional neural network (CNN) with 3 convolution layers with 128, 256, and 256 filters to extract the features of the image and then concatenate with the label information (repeated 10 times to match the dimension of the features). The concatenated information is sent to a fully connected layer and then to the output layer. For the generator $G$, we concatenate the label information with the random noise vector $\eta \sim N(\mathbf{0}, \mathbf{I}_{100}).$
Then it is fed to a CNN with 3 deconvolution layers with 256, 128, and 1 filters, respectively.

\begin{figure}[H]
	\centering
\includegraphics[width=5.0 in, height=1.8 in]{./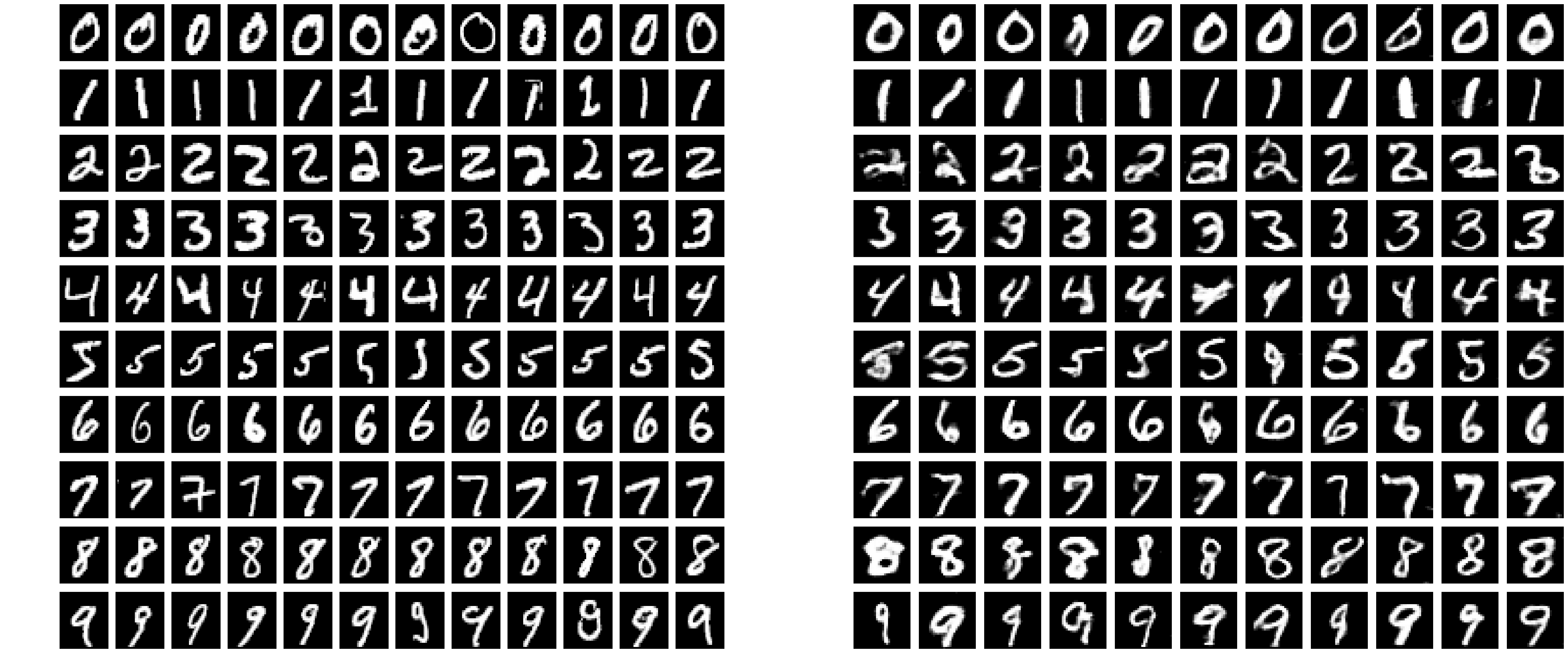}
	\caption{MNIST dataset: real images (left panel) and generated images given the labels (right panel).}
	\label{fig:mnist_xlabel}
\end{figure}

Figure \ref{fig:mnist_xlabel} shows the real images (left panel) and generated images (right panel).
We see that the generated images are similar to the real images and it is hard to distinguish the generated ones from the real images.
Also, there are some differences in the generated images, reflecting the random variations in the generating process.
\subsection{Image generation given labels: CelebA dataset}
We illustrate the application of WGCS to the problem of image generation
with the CelebFaces Attributes (CelebA) dataset \citep{liu2015faceattributes}, which is a large-scale dataset containing more then 200K colored celebrity images with 40 attributes annotation. Here we use 10 features including: \textit{Gender, Young, Bald, Bangs, Blackhair, Blondhair, Eyeglasses, Mustache, Smiling, WearingNecktie}
as binary variables. We  take these features
as $X$. So $X$ is a 
vector consisting of 10 binary components.
We used the aligned and cropped images and further resize them to $96\times96$ as our training set. We take these colored images as the values of $Y$. Therefore, the dimension of $X$ is 10 and the dimension of $Y$ is $96\times96\times3=27,648$.

\begin{table}[H]
	\centering
	\begin{tabular}
{|c|C{0.8cm}C{0.8cm}C{0.9cm}C{0.9cm}C{0.8cm}C{0.7cm}C{0.6cm}C{0.6cm}C{0.9cm}C{0.8cm}|}
		\hline
		Attributes & Gender & Young&Blkhair&Bldhair&Glass&Bald&Mus&Smile& Necktie&Bangs\\
		\hline
		Type 1 &F &Y&N&N&N&N&N&Y&N&N\\
		Type 2 &F&Y&N&Y&N&N&N&Y&N& N\\
		Type 3 &F&Y&Y&N&N&N&N&N&N& N\\
		Type 4 &M&Y&N&N&N&N &N&N&N&N\\
		Type 5 &M&N&N&N&N&N&N&Y &N&N\\
		Type 6 &M&Y&Y&N&N&N&N&Y&N&N\\
		\hline
	\end{tabular}\\

\medskip
	\caption{Attributes for six types of face images in the CelebA dataset:  F$=$Female,  M$=$Male; Y$=$ Yes, N$=$ No.
}
	\label{tab:celeba_type}
\end{table}

The architecture of the discriminator is specified as follows: the 10 dimensional one-hot vector is first expanded to $96\times96\times 10$ by replicating each number to a $96\times 96 $matrix. Then it is concatenated  with image $Y$ on the last axis. Thus the processed data has dimension $96\times96\times 13$. This processed data is then fed into 5 consecutive convolution layers initialized by truncated normal distribution with $SD = 0.01$ with 64,128,256,512,1024 filters respectively. The activation for each convolution layer is a leaky RELU with $\alpha = 0.2$. The strides is 2 and the kernel size is 5. After convolution, it is flattened and connected to a dense layer with one unit as output.

The structure of generator is as follows: the input of $X$ and noise is concatenated and fed to a dense layer with $3\times3\times 1024$ units with RELU activation and then reshaped to $3\times3\times 1024$. Then it goes through 5 layers of deconvolution initialized by truncated normal distribution with $SD = 0.01$ with 512,256,128,64,3 filters respectively. The activation for each intermediate convolution layer is RELU and hyperbolic tangent function for the last layer. The strides is 2 and kernel size is 5.
The optimizer is Adam with $\beta_1 = 0.5$ and learning rate decrease from $0.0001$ to $0.000005$. The noise dimension is 100.

\begin{figure}[H]
	\centering
	\includegraphics[width=5.0 in, height=2.2 in]{./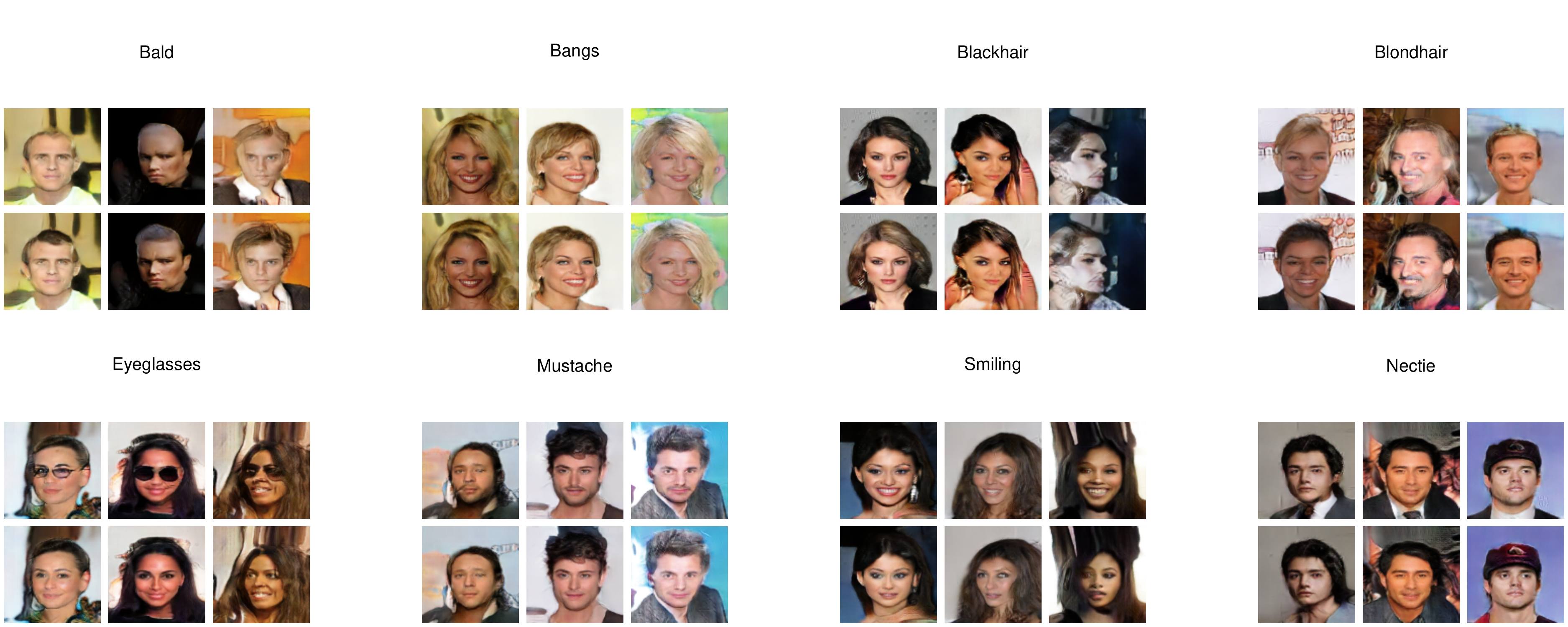}
	\caption{Comparison of the binary features. All images in this figure are generated images. The first row and second row are generated given the same random noise and attributes except for the one noted above.}
	\label{fig:celebafeatures}
\end{figure}

In Figure \ref{fig:celebafeatures}, we present image generation given some specific attributes.
All the images in this figure are generated. For each panel, there are three pairs of faces. Each pair is generated with the same $\eta$ except for the feature labeled above so they look exactly the same except for the specified feature. For example, the first panel shows the generate bald or non-bald faces, the first row includes bald faces and the second row includes non-bald faces.

\section{A high level description of the error analysis}
Below we first present a high level description of the error analysis.
For the estimator $\hG$ of the conditional generator as given in
(\ref{2}), we are interested in bounding the error $d_{\cF^1}(P_{X,\hG},P_{X,Y})$.
Our basic idea is to decompose this error into terms that are easier to analyze.

Let $\{(X'_i,Y'_i), i=1,\ld,n\}$ and $\{\eta'_j, j=1,\ld,n\}$ be ghost samples that are independent of the original samples. Here we introduce ghost samples as a technical tool
for bounding the stochastic error term $\cE_4$ defined below. The details are given in the proof of
Lemma \ref{ep} in the appendix.
We consider $(\hG, \hD)$ based on the empirical version of $\cL(G,D)$ that depends on the original samples $(X_i, Y_i, \eta_i)$ given in (\ref{objE}) and $(\hG', \hD')$ based on the loss function of
the ghost samples $(X'_i, Y'_i, \eta'_i)$,
\begin{align*}
 \cL_n'(G,D)= \fracsumton D(X'_i,G(\eta'_i,X'_i))-\fracsumton D(X'_i,Y'_i).
\end{align*}
Recall the error $d_{\cF^1}(P_{X,\hG},P_{X,Y})$ is defined by
\[
d_{\cF_B^1}(P_{X,\hG},P_{X,Y})=\sup_{f\in \cF_B^1} \{\Ebb f(X, \hG)-\Ebb f(X,Y)\}.
\]
We decompose it as follows:
\begin{align*}
 &d_{\cF_B^1}(P_{X,\hG},P_{X,Y})
    \le \sup_{f\in \cF_B^1}\Big\{ \Ebb f(X,\hG)-\fracsumton f(X'_i, \hat{G}')\Big\}\\
 &   +\sup_{f\in \cF_B^1}
   \Big \{ \fracsumton f(X'_i, \hat{G}')-\fracsumton f(X'_i, Y'_i)\Big\}
    +\sup_{f\in \cF_B^1}\Big\{ \fracsumton f(X'_i, Y'_i)-\Ebb f(X,Y)\Big\} \\
    &:= \cE_4+A + \cE3,
\end{align*}
where  $A=\supcF \{\fracsumton f(X'_i, \hat{G}')-\fracsumton f(X'_i, Y'_i)\}$,
\begin{align}\label{cE3}
\cE_3=\supcF \{\fracsumton f(X'_i, Y'_i)-\Ebb f(X,Y)\},
\end{align}
and
\begin{align}\label{cE4}
\cE_4=\supcF \{\Ebb f(X,\hG)-\fracsumton f(X'_i, \hat{G}')\}.
\end{align}
 By Lemma~\ref{lmaa1}, we have
\begin{align*}
    A &\le 2 \underset{f\in\cF^1}{\sup}\underset{\phi}
{\inf}\|f-D_{\phi}\|_{\infty}+ \sup_{\phi}\Big\{ \fracsumton D_{\phi}(X'_i, \hat{G}')-\fracsumton D_{\phi}(X'_i, Y'_i)\Big\}\\
    &\le 2 \underset{f\in\cF^1}{\sup}\underset{\phi}
{\inf}\|f-D_{\phi}\|_{\infty} + \inf_{\tht}\sup_{\phi}\Big\{ \fracsumton D_{\phi}(X'_i, G_{\tht})-\fracsumton D_{\phi}(X'_i, Y'_i)\Big\}\\
    &\le 2 \cE_1+ \cE_2,
\end{align*}
where
\begin{align}\label{cE1}
\cE_1=\underset{f\in\cF^1}{\sup}\underset{\phi}
{\inf}\|f-D_{\phi}\|_{\infty},
\end{align}
 and
\begin{align}\label{cE2}
\cE_2=\inf_{\tht}\sup_{\phi} \fracsumton \{D_{\phi}(X'_i, G_{\tht})-\fracsumton D_{\phi}(X'_i, Y'_i)\}.
\end{align}

By their definitions, we can see that $\cE_1$ and $\cE_2$ are approximation errors; $\cE_3$ and $\cE_4$ are stochastic errors.

We summarize the above derivation in the following lemma.

\begin{lemma}\label{lma2bound} Let $\hG=G_{\hat{\tht}}$ be the minimax solution in \eqref{2}. Then the bounded Lipschitz distance between $P_{X,\hat{G}}$ and $P_{X,Y}$ can be decomposed as follows.
\begin{align}
    d_{\mathcal{F}_{B}^1}(P_{X,\hat{G}},P_{X,Y})\le 2\mathcal{E}_1+\mathcal{E}_2+\mathcal{E}_3+\mathcal{E}_4.
    \label{decomp}
\end{align}
\end{lemma}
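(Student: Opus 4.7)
The plan is to start from the definition
$d_{\cF_B^1}(P_{X,\hG},P_{X,Y}) = \sup_{f \in \cF_B^1}\{\Ebb f(X,\hG(\eta,X)) - \Ebb f(X,Y)\}$
and insert two intermediate empirical averages built from the ghost sample, namely $\fracsumton f(X'_i,\hG'(\eta'_i,X'_i))$ and $\fracsumton f(X'_i,Y'_i)$. After adding and subtracting these inside the braces, the subadditivity of $\sup$ applied to a sum of three differences immediately splits the bound into three pieces that, by construction, coincide with $\cE_4$, a middle quantity $A$, and $\cE_3$. This step is purely algebraic and requires no probabilistic input.

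The heart of the argument is the treatment of $A = \sup_{f \in \cF_B^1}\{\fracsumton f(X'_i,\hG') - \fracsumton f(X'_i,Y'_i)\}$. I would invoke Lemma~\ref{lmaa1} to replace the evaluation class $\cF_B^1$ by the discriminator network class $\{D_\phi\}$, paying the uniform approximation error $\cE_1 = \sup_{f\in\cF_B^1}\inf_\phi \|f - D_\phi\|_\infty$ twice, once for each of the two appearances of $f$ inside the brace. This yields $A \le 2\cE_1 + \sup_\phi\{\fracsumton D_\phi(X'_i,\hG') - \fracsumton D_\phi(X'_i,Y'_i)\}$. Since $\hG' = G_{\hat\vtheta'}$ is the ghost-sample minimax solution, for every $\vtheta$ the inner supremum at $\hat\vtheta'$ is no larger than the inner supremum at $\vtheta$; taking the infimum over $\vtheta$ on the majorant converts the remaining term into $\cE_2$. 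Combining gives $A \le 2\cE_1 + \cE_2$, and adding back $\cE_3 + \cE_4$ delivers the claimed inequality.

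There is no deep obstacle here: the whole statement is essentially a bookkeeping identity combined with Lemma~\ref{lmaa1}. The one point that deserves care is that the decomposition is built on the \emph{ghost} triple $(X'_i,Y'_i,\eta'_i)$ and on the \emph{ghost} estimator $\hG'$, not on the original ones. This is innocuous for the present pointwise inequality, but it is precisely the device that will make the stochastic term $\cE_4$ tractable later: because $\hG$ and $\hG'$ are independent and identically distributed, one can exchange them in expectation and invoke a symmetrization-type argument. The other minor point to keep in mind is the factor $2$ in front of $\cE_1$, which arises from applying the triangle inequality separately to $|f(X'_i,\hG') - D_\phi(X'_i,\hG')|$ and $|f(X'_i,Y'_i) - D_\phi(X'_i,Y'_i)|$ and then taking the supremum of the sum.
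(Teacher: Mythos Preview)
Your proposal is correct and follows essentially the same route as the paper: insert the two ghost-sample empirical averages, split the supremum into $\cE_4 + A + \cE_3$, then invoke Lemma~\ref{lmaa1} to pass from $\cF_B^1$ to the discriminator class at cost $2\cE_1$, and finally use the minimax optimality of $\hG'$ on the ghost sample to replace the remaining term by $\cE_2$. Your remarks on why the ghost sample is used and on the origin of the factor $2$ are also in line with the paper's motivation.
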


Theorems \ref{thm1} to \ref{thmlowdim} are proved based on the error decomposition
(\ref{decomp})
and by bounding each of the error terms $\mathcal{E}_1$
to $\mathcal{E}_4$.

\section{Supporting Lemmas and Proofs} \label{ap:a}
In this section, we present support lemmas and prove Theorems \ref{thm1} to \ref{thmlowdim}.
We first prove the following lemma.
\begin{lemma}\label{lmaa1}
For any symmetric function classes $\mathcal{F}$ and $\mathcal{H}$, denote the approximation error $\mathcal{E}(\mathcal{H},\mathcal{F})$ as
\begin{align*}
    \mathcal{E}(\mathcal{H},\mathcal{F}):=
    \underset{h\in\mathcal{H}}{\sup}\underset{f\in\mathcal{F}}{\inf}||h-f||_{\infty},
\end{align*}
then for any probability distributions $\mu$ and $\nu$,
\begin{align*}
    d_{\mathcal{H}}(\mu,\nu)-d_{\mathcal{F}}(\mu,\nu)\leq 2\mathcal{E}(\mathcal{H},\mathcal{F}).
\end{align*}
This inequality can be extended to an empirical version by using empirical measures.
\end{lemma}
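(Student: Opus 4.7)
The plan is to prove the lemma directly from the definition of the integral probability metric by inserting an auxiliary function $f \in \mathcal{F}$ that approximates $h \in \mathcal{H}$, then splitting the resulting difference into three manageable terms.

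First, I would fix an arbitrary $\varepsilon > 0$ and an arbitrary $h \in \mathcal{H}$. By the definition of $\mathcal{E}(\mathcal{H},\mathcal{F})$ as an infimum over $f \in \mathcal{F}$, I can choose $f_h \in \mathcal{F}$ such that $\|h - f_h\|_\infty \le \mathcal{E}(\mathcal{H},\mathcal{F}) + \varepsilon$. Then I would write the telescoping identity
\begin{align*}
\mathbb{E}_\mu h - \mathbb{E}_\nu h = \bigl(\mathbb{E}_\mu h - \mathbb{E}_\mu f_h\bigr) + \bigl(\mathbb{E}_\mu f_h - \mathbb{E}_\nu f_h\bigr) + \bigl(\mathbb{E}_\nu f_h - \mathbb{E}_\nu h\bigr).
\end{align*}
The first and third terms are each bounded in absolute value by $\|h - f_h\|_\infty \le \mathcal{E}(\mathcal{H},\mathcal{F}) + \varepsilon$, since expectations of functions bounded by a constant are themselves bounded by that constant. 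The middle term is bounded by $d_{\mathcal{F}}(\mu,\nu)$ by the definition of the IPM (using symmetry of $\mathcal{F}$ to drop the absolute value).

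Combining these bounds gives $\mathbb{E}_\mu h - \mathbb{E}_\nu h \le d_{\mathcal{F}}(\mu,\nu) + 2\mathcal{E}(\mathcal{H},\mathcal{F}) + 2\varepsilon$. Taking the supremum over $h \in \mathcal{H}$ (again invoking symmetry of $\mathcal{H}$ so the sup equals $d_{\mathcal{H}}(\mu,\nu)$) and then letting $\varepsilon \to 0$ yields the stated inequality. For the empirical version, the argument is identical: one simply replaces $\mu$ and $\nu$ by their empirical counterparts, and the three-term decomposition and bounds go through unchanged because they rely only on the pointwise bound $\|h - f_h\|_\infty$ and the definition of the IPM.

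I do not expect any serious obstacle here. The only subtlety is the use of symmetry of the function classes to replace $|\mathbb{E}_\mu f - \mathbb{E}_\nu f|$ by $\mathbb{E}_\mu f - \mathbb{E}_\nu f$ in the sup, which is why symmetry is stated in the hypothesis. The argument is otherwise a standard add-and-subtract trick, and no analytic or measure-theoretic machinery is required beyond linearity of expectation.
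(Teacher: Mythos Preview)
Your proposal is correct and takes essentially the same approach as the paper: both proofs use the add-and-subtract trick to insert an approximating $f\in\mathcal{F}$, bound the two cross terms by $\|h-f\|_\infty$, and bound the middle term by $d_{\mathcal{F}}(\mu,\nu)$. The only cosmetic difference is that the paper first picks $h_\epsilon$ nearly achieving the supremum and keeps the infimum over $f$ explicit, whereas you fix an arbitrary $h$, pick $f_h$ nearly achieving the infimum, and take the supremum at the end; these are equivalent presentations of the same argument.
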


\begin{proof}
By the definition of supremum, for any $\epsilon>0$, there exists $h_{\epsilon}\in\mathcal{H}$ such that
\begin{align*}
    d_{\mathcal{H}}(\mu,\nu):&=\underset{h\in\mathcal{H}}{\sup}[\mathbb{E}_{\mu}h-\mathbb{E}_{\nu}h]\\
    &\leq \mathbb{E}_{\mu}h_{\epsilon}-\mathbb{E}_{\nu}h_{\epsilon}+\epsilon\\
    &=\underset{f\in\mathcal{F}}{\inf}[\mathbb{E}_{\mu}(h_{\epsilon}-f)-\mathbb{E}_{\nu}(h_{\epsilon}-f)+\mathbb{E}_{\mu}(f)-\mathbb{E}_{\nu}(f)]+\epsilon\\
    &\leq 2\underset{f\in \mathcal{F}}{\inf}||h_{\epsilon}-f||_{\infty}+d_{\mathcal{F}}(\mu,\nu)+\epsilon\\
    &\leq 2\mathcal{E}(\mathcal{H},\mathcal{F})+d_{\mathcal{F}}(\mu,\nu)+\epsilon,
\end{align*}
where the last line is due to the definition of $\mathcal{E}(\mathcal{H},\mathcal{F})$.
\end{proof}

\subsection{An equivalent statement}
We hope that functions in the evaluation class $\cF^1$ are defined on a bounded domain so we can apply existing neural nets approximation theorems to bound the approximation error $\cE_1$.
It motivates us to first show that proving the desired convergence rate is equivalent to establishing the same convergence rate but with the domain restricted function class $\cF^1_n:=\{f|_{B_{\infty}(2\log n)}:f\in\cF^1\}$ as the evaluation class under Assumption \ref{asp1}.

Suppose Assumption \ref{asp1} holds. By the Markov inequality we have
\begin{align}
    P(\|Z\|>\log n)\leq \frac{\mathbb{E} \|Z\| \mathbbm{1}_{\{\|Z\|>\log n\}}}{\log n}= O( n^{-\frac{(\log n)^{\delta}}{d+q}}/\log n),
    \label{eq:2.3}
\end{align}
where $Z= (X,Y)$. The bounded Lipschitz distance is defined as
\begin{align}
d_{\cF^1}(P_{X,Y}, P_{X,\hG})&=\supcF \Ebb f(X,Y)-\Ebb f(X, \hG).
\label{eq:2.4}
\end{align}
The first term above can be decomposed as
\begin{align*}
  \mathbb{E} f(Z)&= \mathbb{E} f(Z)\mathbbm{1}_{\|Z\|\le\log n}+\mathbb{E} f(Z)\mathbbm{1}_{\|Z\|>\log n}.
\end{align*}
For any $f\in \mathcal{F}^1$ and a fixed point $\|z_0\|<\log n$, due to the Lipschitzness of $f$, the second term above satisfies
\begin{align*}
    |\mathbb{E} f(Z)\mathbbm{1}_{\|Z\|>\log n}|\le &|\mathbb{E} f(Z)\mathbbm{1}_{\|Z\|>\log n}-\mathbb{E} f(z_0)\mathbbm{1}_{\|Z\|>\log n}|+|\mathbb{E} f(z_0)\mathbbm{1}_{\|Z\|>\log n}|\\
    \le & \mathbb{E}\|Z-z_0 \|\mathbbm{1}_{\|Z\|>\log n}+B  P(\|Z\|>\log n)\\
    = & O(n^{-\frac{(\log n)^{\delta}}{d+q}}),
\end{align*}
where the second inequality is due to lipschitzness and boundedness of $f$, and the last inequality is due to Assumption \ref{asp1} and \eqref{eq:2.3}. The second term in \eqref{eq:2.4} can be dealt similarly due to Condition \ref{cond1} for the network $G_{\theta}$. Hence, restricting the evaluation class to $\cF_n^1$ will not affect the convergence rate in the main results, i.e. $O(n^{-\frac{1}{d+q}})$. Due to this fact, to keep notation simple,
we denote $\mathcal{F}_n^1$ as $\mathcal{F}^1$ in the following sections.

\subsection{Bounding the errors}
We first state two lemmas for controlling the approximation error $\cE_2$ in (\ref{cE2}).
Let $\mathcal{S}^d (z_0,\ldots,z_{N+1})$ be the set of all continuous piecewise linear functions $f:\Rbb\mapsto \Rbb^d,$ which have breakpoints only at $z_0<z_1<\cdots<z_N<z_{N+1}$ and are constant on $(-\infty,z_0)$ and $(z_{N+1},\infty)$.
The following lemma is from \cite{yang2021capacity}.

\begin{lemma}\label{appdis}
Suppose that $W\ge 7d+1$, $L\ge2$ and $N\le (W-d-1)\floor*{\frac{W-d-1}{6d}}\floor*{\frac{L}{2}}$. Then for any $z_0<z_1<\cdots<z_N<z_{N+1}$, $\mathcal{S}^d (z_0,\ldots,z_{N+1})$ can be represented by a ReLU FNNs with width and depth no larger than $W$ and $L$, respectively.
\end{lemma}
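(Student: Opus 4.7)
The plan is to give an explicit construction of a ReLU feedforward network realizing any $f\in\mathcal{S}^d(z_0,\ldots,z_{N+1})$. The starting observation is the classical fact that every continuous piecewise linear function $g:\mathbb{R}\to\mathbb{R}$ with breakpoints $z_0<\cdots<z_{N+1}$ admits an exact representation
\[
g(x)=a_0+b_0 x+\sum_{j=0}^{N+1}c_j\,\sigma(x-z_j),
\]
where $\sigma$ is the ReLU. A shallow network with $N+2$ hidden units therefore suffices, but this uses width linear in $N$. The goal is to trade width for depth so that $N$ can grow multiplicatively in both $W$ and $L$.

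First I would set up the network architecture with three kinds of channels running through every hidden layer: one channel that propagates the input $x$ (using the identity $x=\sigma(x)-\sigma(-x)$ together with a skip-style rewiring), $d$ accumulator channels that carry the partial sums for the $d$ output coordinates, and the remaining $W-d-1$ channels used as scratch space. Next I would partition the indices $\{0,1,\ldots,N+1\}$ into $\lfloor L/2\rfloor$ blocks and process one block per pair of layers. In each block, the first of the two layers spends its scratch units computing a batch of shifted ReLU atoms $\sigma(x-z_j)$; since each of the $d$ coordinates consumes the same atom with a possibly different coefficient, and since implementing a scalar-to-$d$ multiplication-and-accumulate cleanly via ReLUs costs a small constant number of auxiliary units per coordinate, roughly $6d$ units suffice to realize the contribution of one new breakpoint to all $d$ accumulators. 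This is exactly the origin of the factor $\lfloor (W-d-1)/(6d)\rfloor$ in the width budget and of the factor $\lfloor L/2\rfloor$ in the depth budget. After $\lfloor L/2\rfloor$ blocks have been processed the accumulator channels hold the full sum, and a final output linear layer reads them off.

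I would then verify two things. The total number of breakpoints that the construction can absorb is
\[
(W-d-1)\,\Bigl\lfloor\tfrac{W-d-1}{6d}\Bigr\rfloor\,\Bigl\lfloor\tfrac{L}{2}\Bigr\rfloor,
\]
matching the hypothesis, and the width at every intermediate layer is bounded by $W$ (the hypothesis $W\ge 7d+1$ guarantees $\lfloor (W-d-1)/(6d)\rfloor\ge 1$, so at least one breakpoint can be processed per block). Depth is $2\lfloor L/2\rfloor\le L$ by construction.

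The main obstacle is the bookkeeping. ReLU units are one-sided, so carrying signed quantities (the input coordinate, the accumulators, and the signed coefficients $c_j$) through the layers while respecting the total width $W$ requires a careful layout in which auxiliary pairs $\sigma(\cdot)$, $\sigma(-\cdot)$ are reused across blocks. Getting the constants exactly right — in particular justifying the specific numerator $W-d-1$ and denominator $6d$ rather than looser versions — is the delicate part, and it is where I would follow the construction of \cite{yang2021capacity} rather than reinventing it; the essential design principles, however, are the channel reservation and block-processing scheme sketched above.
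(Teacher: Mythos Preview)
The paper does not prove this lemma; it is quoted verbatim from \cite{yang2021capacity} and used as a black box. So there is no in-paper argument to compare your sketch against, and in that sense your decision to defer the fine bookkeeping to \cite{yang2021capacity} matches exactly what the paper does.

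That said, your heuristic derivation of the breakpoint count has a gap. Your block-processing argument says: reserve $1$ channel for $x$, $d$ channels for accumulators, and use the remaining $W-d-1$ scratch units; if each breakpoint costs about $6d$ scratch units, a two-layer block absorbs $\lfloor (W-d-1)/(6d)\rfloor$ breakpoints, and $\lfloor L/2\rfloor$ blocks give
\[
\Bigl\lfloor\tfrac{W-d-1}{6d}\Bigr\rfloor\Bigl\lfloor\tfrac{L}{2}\Bigr\rfloor
\]
breakpoints in total. This is off from the stated bound by a full factor of $(W-d-1)$. The missing multiplicative $(W-d-1)$ does not come from sharper constants; it comes from a qualitatively different mechanism in the \cite{yang2021capacity} construction, where width is exploited twice --- once to lay down many ReLU atoms in parallel and once more via an indexing/selection trick that lets a single block contribute on the order of $(W-d-1)\lfloor (W-d-1)/(6d)\rfloor$ breakpoints rather than $\lfloor (W-d-1)/(6d)\rfloor$. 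Your sketch captures the depth factor and the channel-reservation idea correctly, but not this quadratic-in-width gain, so as written it would only justify a weaker statement than the lemma claims.
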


 If we choose $N=(W-d-1)\floor*{\frac{W-d-1}{6d}}\floor*{\frac{L}{2}}$, a simple calculation shows $c W^2 L/d\le N\le C W^2 L/d$ with $c=1/384$ and $C=1/12$. This means when the number of breakpoints are moderate compared with the network structure, such piecewise linear functions are expressible by feedforward ReLU networks. The next result shows that we can apply piecewise linear functions to push each $\eta'_i$ to $Y'_i$.

\begin{lemma}\label{e2}
Suppose probability measure $\nu$ supported on $\mathbb{R}$ is absolutely continuous w.r.t. Lebesgue measure, and probability meansure $\mu$ is supported on $\mathbb{R}^q$. $\eta_i$ and $y_i$ are i.i.d. samples from $\nu$ and $\mu$, respectively for $i\in[n]$. Then there exist generator ReLU FNN $g: \mathbb{R}\mapsto\mathbb{R}^q$ maps $\eta_i$ to $y_i$ for all $i$. Moreover, such $g$ can be obtained by properly specifying $W_2^2 L_2= c_2 q n $ for some constant $12\le c_2\le 384$. 
\end{lemma}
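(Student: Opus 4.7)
The plan is to produce $g$ as a continuous piecewise linear interpolant and then invoke Lemma~\ref{appdis} to realize it as a ReLU feedforward network of the required size. First, I would use Assumption~\ref{asp2} on $\nu$: since $\nu$ is absolutely continuous with respect to Lebesgue measure on $\mathbb{R}$, with probability one the sample $\eta_1,\ldots,\eta_n$ consists of $n$ distinct real numbers. Order them as $\eta_{(1)}<\eta_{(2)}<\cdots<\eta_{(n)}$ and let $y_{(1)},\ldots,y_{(n)}\in\mathbb{R}^q$ be the corresponding $y$-values under this reordering.

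Next, on the event that the $\eta_{(i)}$'s are distinct, define $g:\mathbb{R}\to\mathbb{R}^q$ coordinatewise as the unique continuous piecewise linear function with breakpoints exactly at $\eta_{(1)}<\cdots<\eta_{(n)}$ satisfying $g(\eta_{(i)})=y_{(i)}$ for all $i=1,\ldots,n$, and constant on $(-\infty,\eta_{(1)})$ and on $(\eta_{(n)},\infty)$. By construction $g(\eta_i)=y_i$ for every $i$, so the matching requirement holds. This $g$ lies in $\mathcal{S}^q(\eta_{(1)},\ldots,\eta_{(n)})$ in the notation of Lemma~\ref{appdis} (formally we insert a dummy breakpoint to reach the $N{+}2$ breakpoint format, but this does not affect the count asymptotically).

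Finally, I would apply Lemma~\ref{appdis} with $d=q$ and $N=n$ breakpoints: the lemma guarantees that any function in $\mathcal{S}^q$ with $N$ breakpoints can be exactly represented by a ReLU FNN of width $W_2$ and depth $L_2$ provided $N\le(W_2-q-1)\lfloor(W_2-q-1)/(6q)\rfloor\lfloor L_2/2\rfloor$. Choosing $W_2,L_2$ to saturate this inequality yields, as noted immediately after Lemma~\ref{appdis}, the two-sided bound $W_2^2 L_2/(384\,q)\le n\le W_2^2 L_2/(12\,q)$, i.e.\ $W_2^2 L_2=c_2 q n$ for some $12\le c_2\le 384$, which is exactly the conclusion.

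The only mildly subtle point, rather than a real obstacle, is justifying the vector-valued output: Lemma~\ref{appdis} is already stated for piecewise linear maps into $\mathbb{R}^d$, so applying it with $d=q$ handles all $q$ output coordinates simultaneously within the same width/depth budget, rather than forcing one to build $q$ separate networks and blow up the parameter count by a factor of $q$. One should also remark that the construction is done conditionally on the event that the $\eta_i$'s are distinct, which is a $\nu^{\otimes n}$-probability-one event under Assumption~\ref{asp2}, so the existence of $g$ holds almost surely over the draw of $\eta_1,\ldots,\eta_n$; this is the sense in which the lemma is used when bounding the approximation error $\mathcal{E}_2$.
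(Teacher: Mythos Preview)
Your proposal is correct and follows essentially the same strategy as the paper: exploit the almost-sure distinctness of the $\eta_i$ under absolute continuity, build an explicit continuous piecewise linear map in $\mathcal{S}^q$ sending each $\eta_i$ to $y_i$, and then invoke Lemma~\ref{appdis} with $d=q$ to realize it as a ReLU network of the stated size. The only cosmetic difference is that the paper inserts midpoints $\eta_{i+1/2}$ between consecutive ordered $\eta_i$'s and builds a ``plateau'' interpolant (linear ramp on $(\eta_i,\eta_{i+1/2})$, then constant $y_{i+1}$ on $[\eta_{i+1/2},\eta_{i+1}]$), whereas you use the simpler direct linear interpolation through the data points; both constructions land in $\mathcal{S}^q$ with $O(n)$ breakpoints, so the size calculation $W_2^2 L_2 = c_2 q n$ with $12\le c_2\le 384$ goes through identically.
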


\begin{proof}
By the absolute continuity of $\nu$, all the $\eta_i$ are distinct a.s.. Without loss of generality, assume $\eta_1<\eta_2<\ldots<\eta_n$ (or we can reorder them). Let $\eta_{i+1/2}$ be any point between $\eta_i$ and $\eta_{i+1}$ for $i\in\{1,2,\ldots,n-1\}$. We define the continuous piece-wise linear function $g: \mathbb{R}\mapsto\mathbb{R}^q$ by
\begin{align*}
    g(\eta)=
    \begin{cases}
    y_1 & \eta\leq \eta_{1},\\
    \frac{\eta-\eta_{i+1/2}}{\eta_i-\eta_{i+1/2}}y_i+\frac{\eta-\eta_{i}}{\eta_{i+\frac{1}{2}}-\eta_{i}}y_{i+1} & \eta=(\eta_i, \eta_{i+1/2}), \text{ for } i=1,\ldots,n-1,\\
    y_{i+1} & \eta\in[\eta_{i+1/2},\eta_{i+1}], \text{ for } i=1,\ldots,n-2,\\
    y_n & \eta\geq \eta_{n-1+1/2}.
    \end{cases}
\end{align*}
Then $g$ maps $\eta_i$ to $y_i$ for all $i$. By Lemma \ref{appdis}, $g\in \mathcal{NN}(W_2,L_2)$ if $n\le (W_2-q-1)\floor*{\frac{W_2-q-1}{6q}}\floor*{\frac{L_2}{2}}$. Taking $n= (W_2-q-1)\floor*{\frac{W_2-q-1}{6q}}\floor*{\frac{L_2}{2}}$, a simple calculation shows $W^2_2 L_2 = cqn$ for some constant $12\le c\le 384$.
\end{proof}

\medskip\noindent
\begin{proof}[(Convergence rate of $\Ebb\cE_4$)]
By Lemma \ref{lma9}, we have
\begin{align*}
    \log \mathcal{N}(\epsilon,\mathcal{F}^1,\|\cdot\|_{\infty})\precsim \left(\frac{\log n}{\epsilon}\right)^{d+q}.
\end{align*}
Taking $\delta=n^{-\frac{1}{d+q}}\log n$ and applying Lemma \ref{ep}, we obtain
\begin{align*}
    \mathbb{E}\mathcal{E}_4&=O\left(\delta+n^{-\frac{1}{2}}(\log n)^{\frac{d+q}{2}}\int_{\delta}^M \epsilon^{-\frac{d+q}{2}} d\epsilon \right)\\
    &= O\left(\delta+n^{-\frac{1}{2}}(\log n)^{\frac{d+q}{2}}\delta^{1-\frac{d+q}{2}}\right)\\
    &= O(1) (d+q)^{1/2}\left(n^{-\frac{1}{d+q}}\log n+n^{-\frac{1}{d+q}}\log n \right)\\
    &= O(1) (d+q)^{1/2}\left(n^{-\frac{1}{d+q}}\log n\right).
\end{align*}
\end{proof}

We now consider a way to characterize the distance between  two joint distributions of $(X,\hG(\eta,X))$ and $(X,Y)$ by the Integral Probability Metric (IPM,  \cite{muller})
with respect to the uniformly bounded 1-Lipschitz function class $\mathcal{F}^1$,
which is defined as
\begin{align*}
\mathcal{F}^1:=\{f: \mathbb{R}^{d+q}\mapsto \mathbb{R} \mid |f(z_1)-f(z_2)|\leq \|z_1-z_2\| \text{ and }\|f\|_{\infty}\leq B \text{ for some } B>0\}.
\end{align*}
The metric $d_{\cF^1}$ is also known as the bounded Lipschitz metric ($d_{BL}$) which metricizes the weak topology on the space of probability distributions. Let $P_{X \hG}$ be the joint distribution of $(X,\hG(\eta,X))$ and $P_{X,Y}$ be the joint distribution of $(X,Y)$. We are bounding the error $\Ebb_{\hG}d_{\cF^1}(P_{X,\hG},P_{X,Y})$.

We consider the error bound under $d_{BL}$ because the approximation errors will be unbounded if we do not impose the uniform bound for the evaluation class $\cF$. However, if
$P_{X,Y}$ has bounded support, then $d_{BL}$ will be essentially $d_{W_1}$.

For convenience, we restate Lemma \ref{lma2bound} below.

\begin{lemma}\label{lma2bounda}
Let $\hG=G_{\hat{\tht}}$ be the minimax solution in \eqref{2}. Then the bounded Lipschitz distance between $P_{X,\hat{G}}$ and $P_{X,Y}$ can be decomposed as follows.
\begin{align*}
    d_{\mathcal{F}^1}(P_{X,\hat{G}},P_{X,Y})\le 2\mathcal{E}_1+\mathcal{E}_2+\mathcal{E}_3+\mathcal{E}_4.
\end{align*}
\end{lemma}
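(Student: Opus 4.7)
\medskip\noindent
\textbf{Proof proposal for Lemma \ref{lma2bounda}.} The plan is to insert two ghost-sample empirical averages into the target quantity, apply the triangle inequality for suprema, and then use Lemma \ref{lmaa1} together with the minimax property of $(\hat{G}', \hat{D}')$ on the ghost data to bound the remaining middle term.

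First I would start from the definition
\begin{align*}
d_{\mathcal{F}^1}(P_{X,\hat G}, P_{X,Y}) = \sup_{f\in\mathcal{F}^1}\Big\{\mathbb{E} f(X,\hat G(\eta,X)) - \mathbb{E} f(X,Y)\Big\}
\end{align*}
and add and subtract the ghost-sample averages $\frac{1}{n}\sum_i f(X_i', \hat G'(\eta_i', X_i'))$ and $\frac{1}{n}\sum_i f(X_i', Y_i')$ inside the braces. Using that $\sup_f (A_f - C_f) \le \sup_f(A_f - B_f) + \sup_f(B_f - B_f') + \sup_f(B_f' - C_f)$ whenever the supremum is over a common class, this produces exactly the three terms $\mathcal{E}_4$, $A$, and $\mathcal{E}_3$ defined in the high-level description, where
\begin{align*}
A = \sup_{f\in\mathcal{F}^1}\Big\{\tfrac{1}{n}\textstyle\sum_i f(X_i', \hat G'(\eta_i', X_i')) - \tfrac{1}{n}\textstyle\sum_i f(X_i', Y_i')\Big\}.
\end{align*}

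Next I would bound $A$ alone, which is the only nontrivial step. Letting $\hat\mu_n$ and $\hat\nu_n$ denote the empirical measures of $\{(X_i', \hat G'(\eta_i', X_i'))\}_{i=1}^n$ and $\{(X_i', Y_i')\}_{i=1}^n$ respectively, one has $A = d_{\mathcal{F}^1}(\hat\mu_n, \hat\nu_n)$. Applying Lemma \ref{lmaa1} with $\mathcal{H}=\mathcal{F}^1$ and $\mathcal{F}=\{D_\phi\}$ yields
\begin{align*}
A \le 2\,\underset{f\in\mathcal{F}^1}{\sup}\,\underset{\phi}{\inf}\|f - D_\phi\|_\infty + \sup_{\phi}\Big\{\tfrac{1}{n}\textstyle\sum_i D_\phi(X_i', \hat G'(\eta_i', X_i')) - \tfrac{1}{n}\textstyle\sum_i D_\phi(X_i', Y_i')\Big\},
\end{align*}
so the first summand is exactly $2\mathcal{E}_1$. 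For the second summand, I would invoke the definition of $(\hat\theta', \hat\phi')$ as the minimax solution of the ghost-sample empirical loss: the inner supremum attained at $\hat G' = G_{\hat\theta'}$ is by definition $\inf_\theta \sup_\phi \{\cdot\}$, which is precisely $\mathcal{E}_2$. Combining gives $A \le 2\mathcal{E}_1 + \mathcal{E}_2$.

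The main subtlety is the ghost-sample trick: we need the middle quantity $A$ to involve quantities independent of the original training data so that, in the subsequent stochastic-error analysis, symmetrization can be applied to $\mathcal{E}_4$ without the coupling between $\hat G$ and $(X_i, Y_i, \eta_i)$ causing trouble. Distributionally, $(\hat G', \{(X_i', Y_i', \eta_i')\})$ has the same law as $(\hat G, \{(X_i, Y_i, \eta_i)\})$, so the bound obtained for $A$ can be combined with $\mathcal{E}_3$ and $\mathcal{E}_4$ (which carry the same distribution as their counterparts on the original sample) to yield the claimed decomposition $d_{\mathcal{F}^1}(P_{X,\hat G}, P_{X,Y}) \le 2\mathcal{E}_1 + \mathcal{E}_2 + \mathcal{E}_3 + \mathcal{E}_4$. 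The step I expect to need the most care in writing cleanly is justifying the passage from the inner supremum evaluated at $\hat G'$ to the infimum over $\theta$, which rests on the exact definition of the minimax solution together with the fact that $\hat D'$ maximizes the inner loss for fixed $\hat G'$.
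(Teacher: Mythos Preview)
Your proposal is correct and follows essentially the same route as the paper: insert the two ghost-sample empirical averages, split via the triangle inequality for suprema into $\mathcal{E}_4 + A + \mathcal{E}_3$, then apply Lemma~\ref{lmaa1} to $A$ and use the minimax definition of $\hat G'$ on the ghost sample to replace $\sup_\phi\{\cdot\}$ at $\hat G'$ by $\inf_\theta\sup_\phi\{\cdot\}=\mathcal{E}_2$. Your remark about why the ghost sample is needed (so that $\mathcal{E}_4$ can later be symmetrized) matches the paper's explanation as well.
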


\begin{remark}
As far as we know, all the tools for controlling approximation error $\cE_1$ require that the approximated functions be defined on a bounded domain. Hence, the restriction on $2\log n$ cube for $\cF^1$ is technically necessary for controlling $\cE_1$. The fact that we used uniform bound $B$ in the proof also explains that we can only consider $d_{BL}$ for the unbounded support case. However, in the bounded support case, this statement is obviously unnecessary and we are able to obtain results under 1-Wasserstein distance.
\end{remark}

\subsection{Bounding the error terms}

\noindent
\textbf{Bounding $\cE_1$.}
The discriminator approximation error
$\cE_1=\sup_{f\in\cF^1}\inf_{\phi}\|f-D_{\phi}\|_{\infty}$ describes how well the discriminator neural network class is in the task of approximating functions from the Lipschitz class $\mathcal{F}^1$. Intuitively speaking, larger discriminator class architecture will lead to smaller $\mathcal{E}_1$. There has been much recent work on the approximation power of deep neural networks \citep{
lu2020deep, lu2017expressive,
montanelli2019new,
petersen2018optimal,
 shen, shen2019nonlinear, suzuki2018adaptivity,
 yarotsky2017error, yarotsky2018optimal}, quantitatively or non-quantitatively, asymptotically or non-asymptotically.
The next lemma is a quantitative and non-asymptotic result from \cite{shen}.

\begin{lemma}[\cite{shen} Theorem 4.3]\label{lma2shen}
Let $f$ be a Lipschitz continuous function defined on $B^{d+q}_{\infty}(R)$. For arbitrary $W_1, L_1\in \mathbb{N}_+$, there exists a function $D_{\phi}$ implemented by a ReLU feedforward neural network with width no more than $W_1$ and depth no more than $L_1$ such that
\begin{align*}
    ||f-D_{\phi}||_{\infty}\precsim   R\sqrt{d+q}(W_1 L_1)^{-\frac{2}{d+q}}.
\end{align*}
\end{lemma}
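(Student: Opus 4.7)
The plan is to prove this via a sub-cube partition plus a bit-extraction construction, which is the standard route for the Shen--Yang--Zhang rate. Concretely, I would begin by partitioning $B^{d+q}_{\infty}(R)$ into $K^{d+q}$ axis-aligned sub-cubes of side $2R/K$, for an integer $K$ to be chosen. Fixing a representative vertex $v_\alpha$ in each sub-cube $Q_\alpha$, I set $\tilde f(x)=f(v_\alpha)$ for $x\in Q_\alpha$. Because $f$ is $1$-Lipschitz in $\|\cdot\|_2$ and the $\ell_2$-diameter of each $Q_\alpha$ is $\sqrt{d+q}\cdot 2R/K$, the piecewise-constant approximation satisfies $\|f-\tilde f\|_\infty\le 2R\sqrt{d+q}/K$. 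The task then reduces to implementing $\tilde f$ (or a continuous proxy of it) by a ReLU network of width $W_1$ and depth $L_1$.

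The naive implementation would need one indicator gadget per sub-cube and give only rate $(W_1L_1)^{-1/(d+q)}$. To achieve the squared exponent $(W_1L_1)^{-2/(d+q)}$, I would invoke the super-expressivity of depth: iterated sawtooth compositions $T^{\circ L}$ built from ReLUs produce $2^{L}$ oscillations with $O(L)$ neurons, and a network of width $W_1$ and depth $L_1$ can encode a look-up table of length roughly $(W_1 L_1)^2$. Concretely, I would (a) use a subnetwork to compute, coordinate-wise, the integer index $\alpha_j\in\{0,\dots,K-1\}$ of the sub-cube containing $x_j$, which takes depth $O(\log K)$ and width $O(d+q)$; (b) combine these into a single base-$K$ index $\iota=\sum_j \alpha_j K^{j}$ of length $K^{d+q}$; (c) implement an "addressable memory" network whose $\iota$-th output is $f(v_{\alpha(\iota)})$, via a composed sawtooth / bit-extraction gadget. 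Balancing $K^{d+q} \asymp (W_1L_1)^2$, i.e.\ $K \asymp (W_1L_1)^{2/(d+q)}$, and keeping the width-depth budget in the bit-extraction module comparable to $W_1$ and $L_1$, yields the error bound $\|f-D_\phi\|_\infty \lesssim R\sqrt{d+q}\,(W_1L_1)^{-2/(d+q)}$.

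Two steps require care. The first is the continuity fix: $\tilde f$ is discontinuous on sub-cube boundaries, so I would either work with a slightly softened transition (linear interpolation of width $\varepsilon\ll 1/K$ between neighboring cells, whose contribution is absorbed into the error) or route the look-up output through a single continuous quantizer that is piecewise linear in $\iota$. The second and more delicate step is the bit-extraction gadget itself: I would construct an explicit ReLU subnetwork based on the Telgarsky sawtooth $T(x)=2\sigma(x)-4\sigma(x-\tfrac12)$ and its $L$-fold composition, showing that with width $\asymp W_1$ and depth $\asymp L_1$ one can read the $\lceil \log_2(K^{d+q})\rceil$-bit binary expansion of $\iota$ and index a stored table of values, while preserving continuity and respecting the prescribed width and depth bounds.

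The main obstacle, in my view, is this last bit-extraction / table-lookup construction: one needs a careful accounting of how width can be traded against depth so that the total parameter count is $O(W_1 L_1)$ while the effective number of representable pieces is $\Theta((W_1 L_1)^2)$, and one must do so with a continuous ReLU implementation whose output variation within each sub-cube does not spoil the $R\sqrt{d+q}\,(W_1 L_1)^{-2/(d+q)}$ rate. Everything else---the Lipschitz-diameter bound producing the $\sqrt{d+q}$ prefactor, and the choice $K\asymp(W_1L_1)^{2/(d+q)}$---is essentially bookkeeping once the gadget is in hand.
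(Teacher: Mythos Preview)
The paper does not prove this lemma at all: it is stated as a direct citation of Theorem~4.3 in \citet{shen} and used as a black box to bound the discriminator approximation error $\mathcal{E}_1$. So there is no ``paper's own proof'' to compare against; the paper simply invokes the result and moves on.

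Your outline is a faithful high-level sketch of the actual Shen--Yang--Zhang construction in the cited reference: partition the cube into $K^{d+q}$ sub-cubes, approximate $f$ by a piecewise-constant (or trifling-region-corrected) function whose error is $\lesssim R\sqrt{d+q}/K$, and then realize this step function via a bit-extraction/point-fitting subnetwork that achieves $K^{d+q}\asymp (W_1L_1)^2$ memorized values with width $W_1$ and depth $L_1$. The two caveats you flag---the continuity fix on sub-cube boundaries (handled in the reference via a ``trifling region'' of small measure rather than a soft transition) and the careful width--depth bookkeeping in the bit-extraction gadget---are exactly the places where the cited paper does the real work. If your goal is only to reproduce the present paper's argument, you can simply cite the result as the authors do; if your goal is to supply a self-contained proof, your plan is on the right track but the bit-extraction module will require substantially more detail than a sketch to be rigorous.
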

When balancing the errors, we can let the discriminator structure be $W_1 L_1\ge \sqrt{n}$ and $R=2\log n$ so that $\cE_1$ is of the order $\sqrt{d+q}n^{-\frac{1}{d+q}}\log n$, which is the same order of the statistical errors.

\medskip\noindent
\textbf{Bounding $\cE_2$.}
The generator approximation error $\cE_2=\inf_{\tht}\sup_{\phi} n^{-1}\sum_{i=1}^n D_{\phi}(X'_i, G_{\tht})- n^{-1}\sum_{i=1}^n D_{\phi}(X'_i, Y'_i)$ describes how powerful the generator class is to realize the empirical version of noise outsourcing lemma. If we can find a ReLU FNN $G_{\tht_0}$ such that $G_{\tht_0}(\eta'_i,X'_i)=Y'_i$ for all $i\in[n]$, then $\cE_2=0$.
To ease the problem, it suffices to find a $\Rbb^m\mapsto\Rbb^q$ ReLU FNN that maps all the $\eta'_i$ to $Y'_i$. If such a ReLU network exists, then the desired $G_{\tht_0}$ exists automatically by ignoring the input $X'_i$'s. The existence of such a neural network is guaranteed by the Lemma \ref{e2} in appendix, where the structure of the generator network is to be set as $W_2^2 L_2=cqn$ for some constant $12<c<384$. Hence by properly setting the generator network in this way, we can realize $\cE_2=0$. Note that Lemma \ref{e2} holds under the condition that the range of $G_{\theta}$ covers the support of $P_{Y}$. Since we imposed Condition \ref{cond1}, this is not always satisfied. However, assumption \ref{asp1} controls the probability of the bad set where $\cE_2\ne 0$ and we can show that the desired convergence rate is not affected by the bad set.

\medskip\noindent
\textbf{Bounding $\cE_3$.}
The statistical error $\cE_3=\supcF  n^{-1}\sum_{i=1}^n f(X'_i, Y'_i)-\Ebb f(X,Y)$ quantifies how close the empirical distribution and the true target are under bounded Lipschitz distance. The following lemma is a standard result quantifying such a distance.

\begin{lemma}[\cite{lu2020universal} Proposition 3.1]
\label{lu}
Assume that probability distribution $\pi$ on $\Rbb^d$ satisfies that $M_3=\Ebb _{\pi} |X|^3<\infty$, and let $\hat{\pi}_n$ be its empirical distribution. Then
\[
\Ebb d_{W_1}(\hat{\pi}_n,\pi)\precsim \sqrt{d}n^{-\frac{1}{d}}.
\]
\end{lemma}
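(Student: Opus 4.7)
The plan is to use Kantorovich--Rubinstein duality together with a dyadic partitioning of $\Rbb^d$, in the spirit of Dudley and Fournier--Guillin. First I would truncate: fix $R>0$ (to be chosen later), let $\pi_R$ be $\pi$ restricted to the cube $Q_R=[-R,R]^d$, and bound the tail contribution to $d_{W_1}$ by $\int_{\|x\|>R}\|x\|\,d\pi(x)$, which by Markov is $\lesssim M_3/R^{2}$. This reduces everything to controlling $d_{W_1}$ between the truncated target and the empirical measure on $Q_R$.

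Next I would introduce dyadic cube partitions $\mathcal{Q}_j$ of $Q_R$ at scales $R 2^{-j}$, for $j=0,1,\ldots,J$. A standard transport construction (mass moves up the dyadic tree and is rearranged within each finest cell) gives the bound
\begin{equation*}
W_1(\mu,\nu)\;\lesssim\;\sqrt{d}\sum_{j=0}^{J} R\,2^{-j}\!\!\sum_{Q\in\mathcal{Q}_j}|\mu(Q)-\nu(Q)|\;+\;\sqrt{d}\,R\,2^{-J},
\end{equation*}
valid for any probability measures $\mu,\nu$ supported on $Q_R$. Applying this with $\mu=\hat\pi_n$ and $\nu=\pi_R$ and taking expectation, I would use the elementary variance bound $\Ebb|\hat\pi_n(Q)-\pi(Q)|\le\sqrt{\pi(Q)/n}$ together with Cauchy--Schwarz and $\sum_{Q\in\mathcal{Q}_j}\pi(Q)\le 1$ to get
\begin{equation*}
\sum_{Q\in\mathcal{Q}_j}\Ebb|\hat\pi_n(Q)-\pi(Q)|\;\le\;\sqrt{|\mathcal{Q}_j|/n}\;=\;2^{jd/2}n^{-1/2}.
\end{equation*}

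Plugging this in and focusing on $d\ge 3$ (where $2^{j(d/2-1)}$ is increasing), the geometric sum over $j$ is dominated by its top term, giving
\begin{equation*}
\Ebb d_{W_1}(\hat\pi_n,\pi)\;\lesssim\;\sqrt{d}\,R\bigl(2^{J(d/2-1)}n^{-1/2}+2^{-J}\bigr)+M_3/R^{2}.
\end{equation*}
I would then balance the two in-cube terms by choosing $2^J\asymp n^{1/d}$, which makes both equal to $\sqrt{d}\,R\,n^{-1/d}$, and finally take $R$ a large constant (or growing logarithmically, absorbed into constants) so that the tail $M_3/R^2$ is dominated. This yields the claimed rate $\Ebb d_{W_1}(\hat\pi_n,\pi)\precsim \sqrt{d}\,n^{-1/d}$.

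The main obstacle is the explicit $\sqrt{d}$ prefactor rather than a worse dimension dependence. The $\sqrt{d}$ enters naturally through the diameter of a side-$R 2^{-j}$ cube being $\sqrt{d}\,R 2^{-j}$, but one must verify that the sum over dyadic levels and the cube counts $2^{jd}$ do not secretly introduce an extra factor like $2^{d/2}$; careful handling requires that the geometric series be truncated at exactly the right $J$ and that the $\sqrt{d}$ factor from the cube diameter be pulled outside the sum rather than propagated through the Cauchy--Schwarz step. The boundary cases $d=1,2$ (where the dyadic geometric series behaves differently) need a separate, simpler argument.
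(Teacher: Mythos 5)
First, a point of comparison: the paper does not prove this lemma at all --- it is quoted verbatim as Proposition~3.1 of \cite{lu2020universal} and used as a black box --- so there is no internal proof to measure your argument against; I can only assess it on its own terms.

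Your dyadic-partition core (the multiscale transport bound, the variance bound $\Ebb|\hat\pi_n(Q)-\pi(Q)|\le\sqrt{\pi(Q)/n}$, the Cauchy--Schwarz step giving $2^{jd/2}n^{-1/2}$, and the balancing $2^J\asymp n^{1/d}$ for $d\ge 3$) is the standard Dudley/Fournier--Guillin argument and is correct, including the tracing of the $\sqrt d$ prefactor to the cube diameter. The genuine gap is in the tail handling. After your truncation the total error is of order $\sqrt d\,R\,n^{-1/d}+M_3R^{-2}$. Taking $R$ to be a constant leaves $M_3R^{-2}$ as a \emph{constant}, which cannot be ``dominated'' by a term tending to zero; taking $R$ to grow logarithmically leaves a tail decaying slower than any power of $n$; and optimizing over $R$ gives $R\asymp n^{1/(3d)}$ and a total error of order $n^{-2/(3d)}$, strictly worse than the claimed $n^{-1/d}$. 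So a single-radius truncation cannot deliver the lemma, and the final step of your plan as written fails.

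The standard repair --- and essentially what Fournier and Guillin do --- is to avoid a single truncation: decompose $\Rbb^d$ into dyadic annuli $B_{2^{k+1}}\setminus B_{2^k}$, run the partition argument on each annulus separately, and on the $k$-th annulus take the \emph{minimum} of the fluctuation bound and the trivial transport bound of order $2^{k}\,\pi(B_{2^k}^c)$. Markov's inequality with the third moment gives $\pi(B_{2^k}^c)\le M_3 2^{-3k}$, so the far annuli contribute a convergent series whose total is $O(n^{-2/3})$, which is dominated by $n^{-1/d}$ once $d\ge3$. This weighted treatment of the tail is exactly the ingredient your single cutoff is missing; without it your argument establishes only the weaker rate $n^{-2/(3d)}$. (Your caveat about $d=1,2$ is well taken; note in fact that for $d=1$ the stated bound $n^{-1}$ is not attainable in general, the true rate being $n^{-1/2}$, so the lemma as quoted is implicitly restricted to $d\ge3$, which is the regime relevant to the paper.)
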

The finite moment condition is satisfied due to \eqref{eq:2.3} and $E|X|^3=\int_0^{\infty}3t^2 P(|X|>t)dt$. Recall that $d_{\cF^1}(\hat{\pi}_n,\pi)\le d_{W_1}(\hat{\pi}_n,\pi)$, hence we have
\begin{align}
\label{E3b}
\Ebb \cE_3\precsim \sqrt{d+q}n^{-\frac{1}{d+q}}.
\end{align}

\medskip\noindent
\textbf{Bounding $\cE_4$.}
Similar to $\cE_3$, the statistical error $\cE_4=\supcF \Ebb f(X,\hG)- n^{-1}\sum_{i=1}^n f(X'_i, \hat{G}')$ describes the distance between the distribution of $(X,\hG)$ and its empirical distribution. We need to introduce the empirical Rademacher complexity $\hat{\mathcal{R}}_n(\mathcal{F})$ to quantify it.
Define the empirical Rademacher complexity of function class $\mathcal{F}$ as
\begin{align*}
    \hat{\mathcal{R}}_n(\mathcal{F}):=\mathbb{E}_{\epsilon}\underset{f\in\mathcal{F}}{\sup}\frac{1}{n}\sum_{i=1}^n\epsilon_i f(X_i, \hG),
\end{align*}
where $\epsilon=(\epsilon_1,\ldots,\epsilon_n)$ are i.i.d. Rademacher variables, i.e. uniform $\{-1,1\}$.

The next lemma shows how to bound $\mathbb{E}\mathcal{E}_4$ by empirical Rademacher complexity using symmetrization and chaining argument. We include the proof here
for completeness, which can also be found in \cite{srebro2010note}.

\begin{lemma}[Random Covering Entropy Integral]\label{ep}
Assume $\sup_{f\in\mathcal{F}}||f||_{\infty}\leq B$. For any distribution $\mu$ and its empirical distribution $\hat{\mu}_n$, we have
\begin{align}
\label{ep1}
    \mathbb{E}[d_{\mathcal{F}}(\hat{\mu}_n,\mu)]\leq2\mathbb{E} \hat{\mathcal{R}}_n(\mathcal{F})\leq \mathbb{E}\inf_{0<\delta<B}\left(4\delta+\frac{12}{\sqrt{n}}\int_{\delta}^{B}\sqrt{\log \mathcal{N}(\epsilon,\mathcal{F},L_{\infty}(P_n))}d\epsilon\right).
\end{align}
\end{lemma}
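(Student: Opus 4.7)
The plan is to prove the two inequalities in the statement separately: the first by classical symmetrization, and the second by Dudley-type chaining combined with Massart's finite-class maximal inequality.

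For symmetrization, I would introduce a ghost sample $Z_1',\ldots,Z_n'$ drawn i.i.d.\ from $\mu$ and independent of $Z_1,\ldots,Z_n$. Writing $\mathbb{E}_\mu f=\mathbb{E}_{Z'}[n^{-1}\sum_i f(Z_i')]$ and pulling the supremum inside $\mathbb{E}_{Z'}$ via Jensen's inequality gives
\begin{align*}
\mathbb{E}\,d_{\mathcal F}(\hat\mu_n,\mu)
\le \mathbb{E}\sup_{f\in\mathcal F}\frac{1}{n}\sum_{i=1}^{n}\bigl(f(Z_i')-f(Z_i)\bigr).
\end{align*}
Exchangeability of each pair $(Z_i,Z_i')$ lets me insert i.i.d.\ Rademacher signs $\epsilon_i$ without changing the joint distribution of the summands, and then the triangle inequality splits the supremum into two symmetric copies of the Rademacher process over $(Z_i)$, producing the factor $2$ and the bound $2\mathbb{E}\hat{\mathcal{R}}_n(\mathcal F)$.

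For the Dudley bound, I would condition on $(Z_1,\ldots,Z_n)$ so that $P_n$ is fixed. For a given $\delta\in(0,B)$, set scales $\epsilon_k:=B\,2^{-k}$ and let $K$ be the smallest index with $\epsilon_K\le\delta$. For each $k\ge 1$ pick a minimal $\epsilon_k$-covering $\mathcal{C}_k$ of $\mathcal F$ in $L_\infty(P_n)$, and take $\mathcal{C}_0=\{0\}$, which is valid because $\|f\|_\infty\le B=\epsilon_0$. Let $\pi_k(f)$ denote the closest element of $\mathcal{C}_k$ to $f$; then the telescoping identity
\begin{align*}
f=\sum_{k=1}^{K}\bigl(\pi_k(f)-\pi_{k-1}(f)\bigr)+\bigl(f-\pi_K(f)\bigr)
\end{align*}
turns the Rademacher average into $K$ chaining pieces plus a residual. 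The residual is uniformly bounded by $\|f-\pi_K(f)\|_{L_\infty(P_n)}\le\epsilon_K\le\delta$, which together with the slack between $\delta$ and $\epsilon_{K+1}$ will account for the $4\delta$ term.

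For each level $k$ the increment $g=\pi_k(f)-\pi_{k-1}(f)$ satisfies $\|g\|_{L_\infty(P_n)}\le\epsilon_k+\epsilon_{k-1}=3\epsilon_k$ and ranges over at most $N_k^2$ distinct functions, where $N_k:=\mathcal N(\epsilon_k,\mathcal F,L_\infty(P_n))$. Hence $n^{-1}\sum_i\epsilon_i g(Z_i)$ is sub-Gaussian with parameter $\|g\|_{L^2(P_n)}/\sqrt n\le 3\epsilon_k/\sqrt n$, and Massart's finite-class lemma bounds the level-$k$ contribution by $6\epsilon_k\sqrt{\log N_k}/\sqrt n$. Using the geometric identity $\epsilon_k=2(\epsilon_k-\epsilon_{k+1})$ together with monotonicity of $\mathcal N(\epsilon,\mathcal F,L_\infty(P_n))$ in $\epsilon$ gives $\epsilon_k\sqrt{\log N_k}\le 2\int_{\epsilon_{k+1}}^{\epsilon_k}\sqrt{\log\mathcal N(\epsilon,\mathcal F,L_\infty(P_n))}\,d\epsilon$; summing over $k=1,\ldots,K$ telescopes to the advertised integral from essentially $\delta$ to $B$ with a $12/\sqrt n$ prefactor. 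Taking the unconditional expectation over $(Z_1,\ldots,Z_n)$ and then the infimum over $\delta$ completes the argument. The main obstacle I anticipate is purely bookkeeping: matching the constants $4$ and $12$ and the integration endpoint $\delta$ requires some care in how $\epsilon_K$ and $\epsilon_{K+1}$ relate to $\delta$ and how the geometric slack between them is absorbed into the $4\delta$ term; the substantive ingredients — symmetrization, nested $L_\infty(P_n)$-coverings, and Massart's lemma — are standard.
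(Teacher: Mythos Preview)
Your proposal is correct and follows essentially the same route as the paper: symmetrization with a ghost sample for the first inequality, and Dudley chaining along the dyadic scales $B2^{-k}$ combined with Massart's finite-class lemma for the second. The only cosmetic difference is that the paper runs the chaining with $L_2(P_n)$-covers and upgrades to $L_\infty(P_n)$ via $\mathcal{N}(\epsilon,\mathcal{F},L_2(P_n))\le\mathcal{N}(\epsilon,\mathcal{F},L_\infty(P_n))$ at the very end, whereas you take $L_\infty(P_n)$-covers from the outset and use $\|g\|_{L^2(P_n)}\le\|g\|_{L_\infty(P_n)}$ when invoking Massart; the resulting constants and bookkeeping are identical.
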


\begin{proof}[{of Lemma \ref{ep}}]
The first inequality in (\ref{ep1})  can be proved by symmetrization. We have
\begin{align*}
\mathbb{E}\cE_4&=\mathbb{E}\underset{f\in\mathcal{F}^1}{\sup}\Ebb f(X,\hG)-\fracsumton f(X'_i, \hat{G}')\\
&=\mathbb{E}\underset{f\in\mathcal{F}^1}{\sup}[\Ebb \frac{1}{n}\sum_{i=1}^n f(X_i,\hat{G})-\frac{1}{n}\sum_{i=1}^n f(X'_i, \hat{G}')]\\
&\leq \mathbb{E}\underset{f\in\mathcal{F}^1}{\sup}[\frac{1}{n}\sum_{i=1}^n f(X_i,\hat{G})-\frac{1}{n}\sum_{i=1}^n f(X'_i, \hat{G}')]\\
&=\mathbb{E}\underset{f\in\mathcal{F}^1}{\sup}[\frac{1}{n}\sum_{i=1}^n \epsilon_i(f(X_i,\hat{G})-f(X'_i, \hat{G}'))]\\
&\leq 2\mathbb{E} \hat{\mathcal{R}}_n(\mathcal{F}^1),
\end{align*}
where the first inequality is due to Jensen inequality, and the third equality is because that $f(X_i,\hat{G})-f(X'_i, \hat{G}')$ has symmetric distribution, which is the reason why we introduce the ghost sample.

The second inequality in (\ref{ep1})  can be proved by chaining. Let $\alpha_0=B$ and for any $j\in \mathbb{N}_+$ let $\alpha_j=2^{-j}B$. For each $j$, let $T_i$ be a $\alpha_i$-cover of $\mathcal{F}^1$ w.r.t. $L_2(P_n)$ such that $|T_i|=\mathcal{N}(\alpha_i,\mathcal{F}^1,L_2(P_n))$. For each $f\in\mathcal{F}^1$ and $j$, pick a function $\hat{f}_i\in T_i$ such that $||\hat{f}_i-f||_{L_2(P_n)}<\alpha_i$. Let $\hat{f}_0=0$ and for any $N$, we can express $f$ by chaining as
\begin{align*}
    f=f-\hat{f}_N+\sum_{i=1}^N(\hat{f}_i-\hat{f}_{i-1}).
\end{align*}
Denote $O_i:=(X_i,\hat{G})$. Hence for any $N$, we can express the empirical Rademacher complexity as
\begin{align*}
    \hat{\mathcal{R}}_n(\mathcal{F}^1)&=\frac{1}{n}\mathbb{E}_{\epsilon}\sup_{f\in\mathcal{F}^1}\sum_{i=1}^n \epsilon_i\left(f(O_i)-\hat{f}_N(O_i)+\sum_{j=1}^N(\hat{f}_j(O_i)-\hat{f}_{j-1}(O_i))\right)\\
    & \leq \frac{1}{n}\mathbb{E}_{\epsilon}\sup_{f\in\mathcal{F}^1}\sum_{i=1}^n \epsilon_i\left(f(O_i)-\hat{f}_N(O_i)\right) +\sum_{i=1}^n\frac{1}{n}\mathbb{E}_{\epsilon}\sup_{f\in\mathcal{F}^1} \sum_{j=1}^N\epsilon_i\left(\hat{f}_j(O_i)-\hat{f}_{j-1}(O_i)\right)\\
    & \leq ||\epsilon||_{L_2(P_n)} \sup_{f\in\mathcal{F}^1}||f-\hat{f}_N||_{L_2(P_n)}+\sum_{i=1}^n\frac{1}{n}\mathbb{E}_{\epsilon}\sup_{f\in\mathcal{F}^1} \sum_{j=1}^N\epsilon_i\left(\hat{f}_j(O_i)-\hat{f}_{j-1}(O_i)\right)\\
    &\leq \alpha_N+\sum_{i=1}^n\frac{1}{n}\mathbb{E}_{\epsilon}\sup_{f\in\mathcal{F}^1} \sum_{j=1}^N\epsilon_i\left(\hat{f}_j(O_i)-\hat{f}_{j-1}(O_i)\right),
\end{align*}
where $\epsilon=(\epsilon_1,\ldots,\epsilon_n)$ and the second-to-last inequality is due to Cauchy–Schwarz. Now the second term is the summation of empirical Rademacher complexity w.r.t. the function classes $\{f'-f'': f'\in T_j, f''\in T_{j-1}\}$, $j=1,\ldots, N$. Note that
\begin{align*}
    ||\hat{f}_j-\hat{f}_{j-1}||^2_{L_2(P_n)}&\leq \left(||\hat{f}_j-f||_{L_2(P_n)}+||f-\hat{f}_{j-1}||_{L_2(P_n)}\right)^2\\
    &\leq (\alpha_j+\alpha_{j-1})^2\\
    &=3 \alpha_j^2.
\end{align*}
Massart's lemma (Lemma \ref{mas} below) states that if for any finite function class $\mathcal{F}$, $\sup_{f\in\mathcal{F}}||f||_{L_2(P_n)}\leq B $, then we have
\begin{align*}
\hat{\mathcal{R}}_n(\mathcal{F})\leq\sqrt{\frac{2 B^2\log (|\mathcal{F}|)}{n}}.
\end{align*}
Applying Massart's lemma to the function classes $\{f'-f'': f'\in T_j, f''\in T_{j-1}\}$, $j=1,\ldots, N$, we get that for any $N$,
\begin{align*}
    \hat{\mathcal{R}}_n(\mathcal{F}^1)&\leq \alpha_N+\sum_{j=1}^N 3\alpha_j\sqrt{\frac{2\log (|T_j|\cdot|T_{j-1}|)}{n}}\\
    &\leq \alpha_N+6\sum_{j=1}^N\alpha_j\sqrt{\frac{\log (|T_j|)}{n}}\\
    &\leq \alpha_N+12\sum_{j=1}^N(\alpha_j-\alpha_{j+1})\sqrt{\frac{\log \mathcal{N}(\alpha_j,\mathcal{F}^1, L_2(P_n))}{n}}\\
    &\leq \alpha_N+12\int_{\alpha_{N+1}}^{\alpha_0}\sqrt{\frac{\log \mathcal{N}(r,\mathcal{F}^1, L_2(P_n))}{n}}dr,
\end{align*}
where the third inequality is due to $2(\alpha_j-\alpha_{j+1})=\alpha_j$. Now for any small $\delta>0$ we can choose $N$ such that $\alpha_{N+1}\leq \delta <\alpha_N$. Hence,
\begin{align*}
    \hat{\mathcal{R}}_n(\mathcal{F}^1)&\leq 2\delta +12\int_{\delta/2}^{B}\sqrt{\frac{\log \mathcal{N}(r,\mathcal{F}^1, L_2(P_n))}{n}}dr.
\end{align*}
Since $\delta>0$ is arbitrary, we can take $\inf$ w.r.t. $\delta$ to get
\begin{align*}
    \hat{\mathcal{R}}_n(\mathcal{F}^1)&\leq\inf_{0<\delta<B}\left( 4\delta +12\int_{\delta}^{B}\sqrt{\frac{\log \mathcal{N}(r,\mathcal{F}^1, L_2(P_n))}{n}}dr\right).
\end{align*}
Now the result follows due to the fact that for any function class $\cF$ and samples,
\begin{align*}
    \mathcal{N}(\epsilon,\mathcal{F}, L_2(P_n))\le \mathcal{N}(\epsilon,\mathcal{F},L_{\infty}(P_n))\le \mathcal{N}(\epsilon,\mathcal{F},\|\cdot\|_{\infty}).
\end{align*}
This completes the proof of Lemma \ref{ep}.
\end{proof}

In $\cE_4=\supcF \Ebb f(X,\hG)-\fracsumton f(X'_i, \hat{G}')$, we used the discriminator network $\hG'$ obtained from the ghost samples for the empirical distribution. The reason is that symmetrization requires two distributions being the same.
In our settings, $(X_i, \hG(X_i,\eta_i))$ and $(X'_i, \hG(X'_i,\eta'_i))$ do not have the same distribution, but $(X_i, \hG(X_i,\eta_i))$ and $(X'_i, \hG'(X'_i,\eta'_i))$ do.  Recall that we have restricted $\cF^1$ to $B_{\infty}(2\log n)$. Since $\mathcal{N}(\epsilon,\mathcal{F},L_{\infty}(P_n))\le \mathcal{N}(\epsilon,\mathcal{F},\|\cdot\|_{\infty})$, now it suffices to bound the covering number $\mathcal{N}(\epsilon,\mathcal{F}^1|_{B_{\infty}(2\log n)},\|\cdot\|_{\infty})$.

Lemma \ref{lma9} below provides an upper bound for the covering number of Lipschitz class. It is a direct corollary of \cite[Theorem 2.7.1]{vw1996}.

\begin{lemma}\label{lma9}
 Let $\mathcal{X}$ be a bounded, convex subset of $\mathbb{R}^d$ with nonempty interior. There exists a constant $c_d$ depending only on $d$ such that
 \begin{align*}
     \log \mathcal{N}(\epsilon, \mathcal{F}^1(\mathcal{X}),||\cdot||_{\infty})\leq c_d \lambda(\mathcal{X}^1)\left(\frac{1}{\epsilon}\right)^{d}
 \end{align*}
for every $\epsilon>0$, where $\mathcal{F}^1(\mathcal{X})$ is the 1-Lipschitz function class defined on $\mathcal{X}$, and $\lambda(\mathcal{X}^1)$ is the Lebesgue measure of the set $\{x:||x-\mathcal{X}||<1\}$.
\end{lemma}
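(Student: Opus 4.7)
The plan is to follow the standard metric-entropy argument for Hölder classes, specialized to smoothness $\alpha=1$, and essentially to recover the conclusion of Theorem 2.7.1 in van der Vaart and Wellner. The proof has two ingredients: a volumetric bound on how many $\epsilon$-balls cover $\mathcal{X}$, and a chaining/discretization argument that turns values of a Lipschitz function at a grid into an $\epsilon$-net for $\|\cdot\|_\infty$.

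First I would construct an $\epsilon$-net $\{x_1,\dots,x_N\}\subseteq\mathcal{X}$ with $N\le C_d\,\lambda(\mathcal{X}^1)\,\epsilon^{-d}$. A maximal $\epsilon$-separated subset is automatically an $\epsilon$-net, and disjoint balls of radius $\epsilon/2$ centered at its points have total volume at most $\lambda(\mathcal{X}^{\epsilon/2})\le\lambda(\mathcal{X}^1)$; dividing by the volume of a Euclidean ball of radius $\epsilon/2$ gives the stated bound, with $\lambda(\mathcal{X}^1)$ absorbing the boundary inflation uniformly in $\epsilon\in(0,1]$. The convexity and nonempty-interior hypotheses make this volumetric comparison clean and ensure $\lambda(\mathcal{X}^1)<\infty$.

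Next I would discretize the values. Fix an anchor point $x_1$ and, after factoring out the additive constant inherent in any 1-Lipschitz class, consider only the differences $f(x_i)-f(x_1)$; by Lipschitzness these lie in $[-\mathrm{diam}(\mathcal{X}),\mathrm{diam}(\mathcal{X})]$. Pick a spanning tree on $\{x_1,\dots,x_N\}$ whose edges have length $O(\epsilon)$ (always possible from the $\epsilon$-net property together with convexity). Along each edge, the increment of any 1-Lipschitz $f$ is at most $O(\epsilon)$, so after rounding each edge-increment to the nearest multiple of $\epsilon$ we get at most a constant $K_d$ many admissible values per edge. Propagating along the tree gives at most $K_d^{\,N-1}$ admissible rounded profiles $(\,\tilde f(x_1),\dots,\tilde f(x_N)\,)$, hence
\[
\log\bigl|\{\text{rounded profiles}\}\bigr|\;\le\;(\log K_d)\,N\;\le\;c_d\,\lambda(\mathcal{X}^1)\,\epsilon^{-d}.
\]

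Finally I would convert the grid-value cover into an $L_\infty$ cover. If $f,g\in\mathcal{F}^1(\mathcal{X})$ produce the same rounded profile, then $|f(x_i)-g(x_i)|\le 2\epsilon$ at every net point; for arbitrary $x\in\mathcal{X}$, choosing a nearest $x_i$ and using the triangle inequality together with the 1-Lipschitz property of both $f$ and $g$ yields $|f(x)-g(x)|\le 4\epsilon$. Rescaling $\epsilon\mapsto\epsilon/4$ absorbs the constant into $c_d$ and produces the claimed bound. The only genuine difficulty is the volumetric Step~1: one must justify that $\lambda(\mathcal{X}^1)$ is indeed the right geometric prefactor uniformly in $\epsilon$, which is exactly what the convex-set hypothesis buys; the remaining steps are routine chaining-style counting.
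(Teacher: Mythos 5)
Your argument is correct, and it is essentially the paper's own argument: the paper proves this lemma simply by citing Theorem 2.7.1 of van der Vaart and Wellner, and what you have written out is precisely the standard Kolmogorov--Tikhomirov proof of that theorem (volumetric $\epsilon$-net of the convex domain, discretization of increments along the net, then conversion to an $\|\cdot\|_{\infty}$ cover via Lipschitzness). The only point worth flagging is the anchor value: ``factoring out the additive constant'' is not enough to cover the class itself, since an unbounded $1$-Lipschitz class has infinite $\|\cdot\|_\infty$-covering number; you must also discretize $f(x_1)$ over a bounded range, which is legitimate here because the class actually used in the paper is the uniformly bounded Lipschitz class $\mathcal{F}^1_B$ with $\|f\|_{\infty}\le B$, and this contributes only an additive $\log(B/\epsilon)$ term that is absorbed into $c_d\,\lambda(\mathcal{X}^1)\epsilon^{-d}$.
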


Applying Lemmas \ref{ep} and \ref{lma9}
and taking $\delta=C\sqrt{d+q}n^{-\frac{1}{d+q}}\log n$ for some constant $C>0$, we have
\begin{align}
\label{E4b}
    \mathbb{E}\mathcal{E}_4&= O\left(  \sqrt{d+q}n^{-\frac{1}{d+q}}\log n\right).
\end{align}

\subsection{Proofs of the theorems}

We are now ready to prove Theorems \ref{thm1} and \ref{thm2}.

\begin{proof}[(\textbf{of Theorem \ref{thm1}})]

By taking $W_1 L_1=\ceil*{\sqrt{n}}$ and $R=2\log n$ in Lemma \ref{lma2bound}, we get $\cE_1\precsim n^{-\frac{1}{d+q}}\log n$. Lemma \ref{e2} states that $\cE_2=0$ as long as the range of $G_{\tht}$ covers all the $Y'_i$, i.e. $\max_{1\le i\le n} \|Y'_i\|_{\infty}\le \log n$. Hence the nice set $H:= \{\max_{1\le i\le n}\|Z'_i\|\le \log n\}$ is where $\cE_2=0$, and $P(H^c)=1-P(H)\le 1- (1-C \frac{n^{-\frac{(\log n)^{\delta}}{d+q}}}{\log n})^n \le C n^{-\frac{(\log n)^{\delta}}{d+q}}/\log n$. Also, we have $\Ebb \cE_3\precsim n^{-\frac{1}{d+q}}$ and $\Ebb \cE_4\precsim n^{-\frac{1}{d+q}}\log n$ by (\ref{E3b}) and (\ref{E4b}), respectively.
Therefore, by Lemma \ref{decomp}, we have
\begin{align*}
    \Ebb d_{\mathcal{F}^1}(P_{X,\hat{G}},P_{X,Y})&\le  \Ebb d_{\mathcal{F}^1}(P_{X,\hat{G}},P_{X,Y})\mathbbm{1}_{H}+\Ebb d_{\mathcal{F}^1}(P_{X,\hat{G}},P_{X,Y})\mathbbm{1}_{H^c}\\
    &\le (2\mathcal{E}_1+\mathcal{E}_2 +\Ebb\mathcal{E}_3+\Ebb \mathcal{E}_4)\mathbbm{1}_{H}+ 2 B P(H^c)\\
    &\precsim n^{-\frac{1}{d+q}}\log n +0 +n^{-\frac{1}{d+q}}+n^{-\frac{1}{d+q}}\log n + n^{-\frac{(\log n)^{\delta}}{d+q}}/\log n\\
    & \precsim  n^{-\frac{1}{d+q}}\log n.
\end{align*}
This completes the proof of Theorem \ref{thm1}.
\end{proof}

\begin{proof}[\textbf{of Theorem \ref{thm2}}]

By taking $W_1 L_1=\ceil*{\sqrt{n}}$ and $R=M$ in Lemma \ref{lma2shen}, we get $\cE_1\precsim n^{-\frac{1}{d+q}}$. Since the range of $G_{\tht}$ covers all the $Y'_i$, we have $\cE_2=0$. Also, we have $\Ebb \cE_3\precsim n^{-\frac{1}{d+q}}$ by previous results. Similar to the procedure for obtaining the convergence rate of $\Ebb\cE_4$, we get $\Ebb \cE_4\precsim n^{-\frac{1}{d+q}}$. In all by Lemma \ref{decomp}, we have
\begin{align*}
    \Ebb d_{\mathcal{F}^1}(P_{X,\hat{G}},P_{X,Y})&\le 2\mathcal{E}_1+\mathcal{E}_2 +\Ebb\mathcal{E}_3+\Ebb \mathcal{E}_4\\
    &\precsim n^{-\frac{1}{d+q}} +0 +n^{-\frac{1}{d+q}}+n^{-\frac{1}{d+q}}\\
    & \precsim  n^{-\frac{1}{d+q}}.
\end{align*}
This completes the proof of Theorem \ref{thm2}.
\end{proof}


We now prove Corollary \ref{thm3}.

\begin{proof}[\textbf{of Corollary \ref{thm3}}]
It suffices to show $\Ebb_{X} d_{W_1}(P_{\hat{G}(\eta,X)}, P_{Y|X})\le d_{W_1}(P_{X,\hat{G}}, P_{X,Y})$. By the definition of Wasserstein distance, we have $d_{W_1}(P_{\hat{G}(\eta,X=x)}, P_{Y|X=x})=\inf_{\gamma_x}\int||\hG-Y||d\gamma_x$, where the $\inf$ is taken over the set of all the couplings of $\hG(\eta,x)$ and $Y|X=x$. Adding a coordinate while preserving the norm, we have
\[
d_{W_1}(P_{\hat{G}(\eta,X=x)}, P_{Y|X=x})=\inf_{\gamma_x}\int\|(x,\hG)-(x,Y)\|d\gamma_x.
\]
Therefore,
\begin{align*}
\Ebb_{X} d_{W_1}(P_{\hat{G}(\eta,X)}, P_{Y|X})&=\int\inf_{\gamma_x}\int\|(x,\hG)-(x,Y)\|d\gamma_x dP_{X}\\
&\le \inf_{\pi} \int\|(X,\hG)-(X,Y)\|d\pi\\
&=d_{W_1}(P_{X,\hat{G}}, P_{X,Y}),
\end{align*}
where the last $\inf$ is taken over the set of all the couplings of $(X,\hG(\eta,X))$ and $(X,Y)$.
\end{proof}

Next, we prove Theorem \ref{thmlowdim}.

\begin{proof}[\textbf{of Theorem \ref{thmlowdim}}]
By taking $W_1 L_1=\ceil*{n^{\frac{d+q}{2d_A}}}$ and $R=M$ in Lemma \ref{lma2shen}, we get $\cE_1\precsim n^{-\frac{1}{d_A}}$. Since the range of $G_{\tht}$ covers all the $Y'_i$, we have $\cE_2=0$. By Assumption \ref{aspmin}, we have $\log \mathcal{N}(\epsilon, A, \|\cdot\|_2)\precsim \log (\frac{1}{\epsilon})^{d_A}$. Plugging it into \eqref{a1} and applying Lemma $\ref{ep}$ again by taking $\delta= n^{-\frac{1}{d_A}}$, we have $\Ebb\cE_3,\Ebb\cE_4\precsim n^{-\frac{1}{d_A}}$.
In all by Lemma \ref{decomp}, we have
\begin{align*}
    \Ebb d_{\mathcal{F}^1}(P_{X,\hat{G}},P_{X,Y})&\le 2\mathcal{E}_1+\mathcal{E}_2 +\Ebb\mathcal{E}_3+\Ebb \mathcal{E}_4\\
    &\precsim n^{-\frac{1}{d_A}} +0 +n^{-\frac{1}{d_A}}+n^{-\frac{1}{d_A}}\\
    & \precsim  n^{-\frac{1}{d_A}}.
\end{align*}
This completes the proof of Theorem \ref{thmlowdim}.
\end{proof}

Finally, we include Massart's lemma for convenience.
\begin{lemma}[Massart's Lemma]\label{mas}
Let $A\subseteq\Rbb^m$ be a finite set, with $r=\max_{x\in A}\|x\|_2$, then the following holds:
\[
\Ebb_\epsilon \frac{1}{n}\sup_{x\in A}\sum_{i=1}^n\epsilon_i x_i \le \frac{r\sqrt{2\log |A|}}{n},
\]
A proof of this lemma can be found at \cite[Theorem 3.3]{mohri2018foundations}.

\end{lemma}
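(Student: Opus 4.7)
The plan is to run the standard Chernoff/Laplace transform argument that moves the supremum past an $\exp$, uses a crude union bound, and then exploits the sub-Gaussian MGF of Rademacher sums. Specifically, for every $\lambda > 0$, by Jensen's inequality applied to $t\mapsto e^{\lambda t}$,
\[
\exp\!\Bigl(\lambda\,\Ebb_\epsilon \sup_{x\in A}\sum_{i=1}^n \epsilon_i x_i\Bigr)
\;\le\;
\Ebb_\epsilon \exp\!\Bigl(\lambda \sup_{x\in A}\sum_{i=1}^n \epsilon_i x_i\Bigr)
\;=\;
\Ebb_\epsilon \sup_{x\in A}\exp\!\Bigl(\lambda\sum_{i=1}^n \epsilon_i x_i\Bigr).
\]
First I would crudely replace the supremum by the sum over the finite set $A$ to obtain
\[
\Ebb_\epsilon \sup_{x\in A}\exp\!\Bigl(\lambda\sum_{i=1}^n \epsilon_i x_i\Bigr)
\;\le\;
\sum_{x\in A}\Ebb_\epsilon \exp\!\Bigl(\lambda\sum_{i=1}^n \epsilon_i x_i\Bigr)
\;=\;
\sum_{x\in A}\prod_{i=1}^n \cosh(\lambda x_i),
\]
using independence of the coordinates of $\epsilon$ and the identity $\Ebb e^{\lambda\epsilon_i x_i}=\cosh(\lambda x_i)$.

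Next I would invoke the elementary bound $\cosh(t)\le \exp(t^2/2)$ (proved by comparing Taylor coefficients termwise) to get
\[
\prod_{i=1}^n \cosh(\lambda x_i) \;\le\; \exp\!\Bigl(\tfrac{\lambda^2}{2}\sum_{i=1}^n x_i^2\Bigr) \;=\; \exp\!\bigl(\tfrac{\lambda^2}{2}\|x\|_2^2\bigr) \;\le\; \exp(\tfrac{\lambda^2 r^2}{2}),
\]
so the whole right-hand side is at most $|A|\exp(\lambda^2 r^2/2)$. Taking logarithms of both sides of the chain, dividing by $\lambda$, and rearranging yields
\[
\Ebb_\epsilon \sup_{x\in A}\sum_{i=1}^n \epsilon_i x_i \;\le\; \frac{\log|A|}{\lambda} + \frac{\lambda r^2}{2}.
\]

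Finally I would optimize the free parameter $\lambda>0$: the right-hand side is minimized at $\lambda^\star = \sqrt{2\log|A|}/r$, giving the bound $r\sqrt{2\log|A|}$. Dividing through by $n$ produces the stated inequality. No step here is a real obstacle — the only point worth being careful about is the innocuous Jensen step (so that one can pull the $\exp$ outside the expectation) and the termwise Taylor comparison $\cosh(t)\le e^{t^2/2}$; both are standard. Since the paper explicitly defers this to Mohri et al., a short two-line proof following exactly this Laplace-transform route suffices.
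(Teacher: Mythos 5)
Your proof is correct and is exactly the argument the paper defers to: Mohri et al.'s Theorem 3.3 proves Massart's lemma by the same Jensen--union-bound--$\cosh(t)\le e^{t^2/2}$ chain followed by optimizing $\lambda=\sqrt{2\log|A|}/r$. Nothing further is needed.
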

\end{appendices}


\end{document}